
\documentclass[10pt,journal,compsoc]{IEEEtran}
%

\usepackage{subfigure}

\usepackage{booktabs}
\usepackage{enumitem}
\usepackage{makecell}
\usepackage{algorithm}
\usepackage{amsfonts, amssymb}
\usepackage{subfiles}
\usepackage{algorithmic}
\usepackage{mathtools}
\usepackage{amsthm}
\usepackage{bm}
\usepackage{graphicx}
\usepackage{multirow}
\usepackage{amssymb} 
\usepackage{ulem}
\usepackage{hyperref}
\usepackage{cuted}
\usepackage[utf8]{inputenc}
\usepackage{ragged2e}
\usepackage[english]{babel}
\newtheorem{theorem}{Theorem}

\newtheorem{proposition}{Proposition}

\hypersetup{hidelinks,
	colorlinks=true,
	allcolors=black,
	pdfstartview=Fit,
	breaklinks=true}






\usepackage{color}

%

%
\ifCLASSOPTIONcompsoc
  \usepackage[nocompress]{cite}
\else
  \usepackage{cite}
\fi
%

%
\ifCLASSINFOpdf
\else
\fi
\hyphenation{op-tical net-works semi-conduc-tor}

\begin{document}
%
\title{Identifying Semantic Component for Robust Molecular Property Prediction}
%
%
%
%

\author{Zijian Li,
        Zunhong Xu, 
        Ruichu Cai*,Zhenhui Yang, Yuguang Yan, Zhifeng Hao ~\IEEEmembership{Senior Member,~IEEE,} Guangyi Chen and Kun Zhang
\IEEEcompsocitemizethanks{\IEEEcompsocthanksitem
Zijian Li is with the School of Computing, Guangdong University of Technology, Guangzhou China, 510006, and Mohamed bin Zayed University of Artificial Intelligence (MBZUAI), Abu Dhabi, UAE.\protect
E-mail: leizigin@gmail.com\protect
\IEEEcompsocthanksitem Zunhong Xu is with the School of Computer Science, Guangdong University of Technology, Guangzhou China, 510006.\protect Email: zunhongxu@gmail.com
\IEEEcompsocthanksitem Ruichu Cai is with the School of Computer Science, Guangdong University of Technology, Guangzhou, China, 510006 and Peng Cheng Laboratory, Shenzhen, China, 518066.
Email: cairuichu@gmail.com\protect
\IEEEcompsocthanksitem Zhenhui Yang is with the School of Computer Science, Guangdong University of Technology, Guangzhou China, 510006.\protect Email: zhenhuiyang@gmail.com\protect
\IEEEcompsocthanksitem Yuguang Yan is with the School of Computer Science, Guangdong University of Technology, Guangzhou China, 510006.\protect Email: ygyan@outlook.com\protect
\IEEEcompsocthanksitem Zhifeng Hao is with the School of Computer Science, Guangdong University of Technology, Guangzhou China, 510006.\protect Email: ZhifengHao@gmail.com\protect
\IEEEcompsocthanksitem Guangyi Chen is with Carnegie Mellon University, Pittsburgh, USA, and Mohamed bin Zayed University of Artificial Intelligence (MBZUAI), Abu Dhabi, UAE.\protect Email: guangyichen1994@gmail.com\protect
\IEEEcompsocthanksitem Kun Zhang is with Carnegie Mellon University, Pittsburgh, USA, and Mohamed bin Zayed University of Artificial Intelligence (MBZUAI), Abu Dhabi, UAE. E-mail: kunz1@cmu.edu\protect
}
\thanks{*Ruichu Cai is the Corresponding author.}}

%
%

\markboth{Journal of \LaTeX\ Class Files,~Vol.~14, No.~8, August~2015}%
{Shell \MakeLowercase{\textit{et al.}}: Bare Demo of IEEEtran.cls for Computer Society Journals}
%



\IEEEtitleabstractindextext{%
\begin{abstract}\justifying
Although graph neural networks have achieved great success in the task of molecular property prediction in recent years, their generalization ability under out-of-distribution (OOD) settings is still under-explored. Different from existing methods that learn discriminative representations for prediction, we propose a generative model with semantic-components identifiability, named \textbf{SCI}. We demonstrate that the latent variables in this generative model can be explicitly identified into semantic-relevant (SR) and semantic-irrelevant (SI) components, which contributes to better OOD generalization by involving minimal change properties of causal mechanisms.
Specifically, we first formulate the data generation process from the atom level to the molecular level, where the latent space is split into SI substructures, SR substructures, and SR atom variables. Sequentially, to reduce misidentification, we restrict the minimal changes of the SR atom variables and add a semantic latent substructure regularization to mitigate the variance of the SR substructure under augmented domain changes. Under mild assumptions, we prove the block-wise identifiability of the SR substructure and the comment-wise identifiability of SR atom variables.
Experimental studies achieve state-of-the-art performance and show general improvement on 21 datasets in 3 mainstream benchmarks. Moreover, the visualization results of the proposed \textbf{SCI} method provide insightful case studies and explanations for the prediction results. The code is available at: https://github.com/DMIRLAB-Group/SCI.

\end{abstract}

\begin{IEEEkeywords}
Molecular Property Prediction, Causal Generation Process, Identification, Graph Out-Of-Distribution\end{IEEEkeywords}}

\maketitle

\IEEEdisplaynontitleabstractindextext

%
\IEEEpeerreviewmaketitle

\IEEEraisesectionheading{\section{Introduction}\label{sec:introduction}}

\label{submission}
\begin{figure*}[h]
		\centering
  \includegraphics[width=2.0\columnwidth]{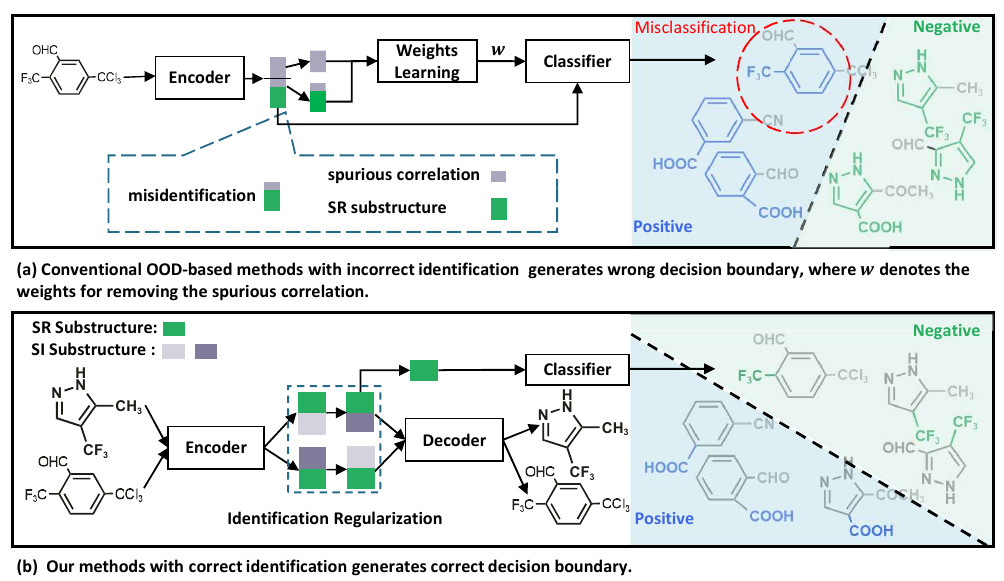}
		\caption{A molecule classification example with the functional groups $(\small{\textbf{-COOH}}_2)$ and $(\small{\textbf{-CF}}_3)$. (a) The discriminative model considers the spurious correlation substructures into the SR substructures and generates the wrong decision boundary that judges if a molecule contains ``Benzene Rings''. (b) The generative model precisely identifies the SR substructures and enjoys ideal generalization performance. 
  \textit{(Best view in color.)}
  }
		\label{fig:model_1}
\end{figure*}

Artificial intelligence has an immense impact on the fields of science via dramatically effective computational algorithms and insightful experimental results \cite{jumper2021highly, atz2021geometric,de2019synthetic,xia2022pre}. As one of the most important cases, molecular property prediction has achieved pioneering applications in various fields such as drug discovery \cite{vamathevan2019applications,ji2023drugood,lu2022tankbind,liu2023interpretable,xu2023graph} and chemical synthesis \cite{butler2018machine}. 

%
In the field of chemistry, density functional theory \cite{neese2009prediction} is a traditional method for molecular prediction, but it is computationally costly. 
With the excessive computation power and the eye-catching performance of deep learning, the GNN-based methods \cite{velivckovic2017graph,battaglia2018relational,lu2019molecular, li2022geomgcl,hu2020pretraining,rong2020self} have become the mainstream methods, which
treat a molecule as a topological graph by treating atoms as nodes. \textcolor{black}{For example, DiffPOOL \cite{ying2018hierarchical} leverages a differentiable graph pooling module to extract the molecule-level representation for property prediction. However, since the molecules are usually sampled from different environments, this leads to the distribution shift between the training and test datasets. To solve this problem, typical methodologies explored in the literature are to extract the invariant disentangled representation for molecular data. Li et.al \cite{li2022ood} leverage the random Fourier features to decorrelate the graph representation. And Liu et.al \cite{liu2022graph} utilize the environment-based augmentation to address the graph OOD challenges, but it cannot theoretically guarantee that the environment or the invariant substructures are correct i.e., identifiable. Recently, several researchers have addressed the graph OOD \cite{li2022out} and molecular property prediction problem from the perspective of causality. Fan et.al \cite{fan2023generalizing} use the technique of stable learning and mitigate the bad influence of distribution shifts on molecular data. 
Fang et.al \cite{fang2022invariant} leverage the prior causal structures and extract the invariant causal substructure with the help of risk extrapolation. 
}

These graph OOD-based methods essentially aim to encourage the independence between SI and SR by \textcolor{black}{modeling the data generation process and reconstructing the latent variables}, but they implicitly assume that the latent variables have been well identified. 
Without identification guarantees, some causality-based methods \cite{wu2022discovering,fan2023generalizing}, which conduct \textit{do-calculus} \cite{pearl2009causality} on latent variables, might result in inaccurate intervention distribution estimation and further poor generalization performance. 
To illustrate this, Figure \ref{fig:model_1} (a) shows that a typical causality-based model without identification guarantees of SR may generate a wrong decision boundary when the spurious correlation substructures are falsely identified as the semantic-relevant components (i.e., the Benzene Rings and the truth SR are identified together). Meanwhile, as shown in Figure \ref{fig:model_1} (b), we can obtain ideal generalization performance by explicitly identifying the semantic relevant (SR) function groups (i.e., $\small{\textbf{-CF}}_3$) with a semantic-relevant substructure identification regularization. Therefore, it is necessary to identify the SR components for robust molecular property prediction.

To achieve identifiability, we propose a \textbf{S}emantic-\textbf{C}omponent \textbf{I}dentifiable model (\textbf{SCI}) to mitigate the negative effect of distribution shift by explicitly disentangling the latent SR and SI components as well as involving the minimal changes property of causal mechanism. 
Technologically, we first formulate the generation process of molecule data in a hierarchical manner, where the latent space between atoms and molecules is split into SI substructures, SR substructures, and SR atom variables. 
Based on this generation process, we reformulate the OOD generalization problem of molecular property prediction as an identifiability problem of latent semantic components, i.e., SR substructures and SR atom variables. 
Sequentially, we identify the distribution of SR atom variables by modeling the data generation process with variational inference with minimal changes property and identify the distribution of the SR substructures by using a data-augmented invariant regularization. 
Under mild assumptions, we theoretically prove that the SR atom variables are component-wise identifiable and the SR substructures are block-wise identifiable.
Extensive experimental studies not only demonstrate the state-of-the-art generalization performance on 21 mainstream datasets of molecular property prediction but also provide insightful visualization results of interpretation.

\textcolor{black}{We summarize our key contribution as follows:}
\begin{itemize}
    \item \textcolor{black}{We propose a general causal generation process for molecular data, which includes several latent variables with identification guarantees.}
    \item \textcolor{black}{Based on the theoretical results, we devise the semantic component identification model (\textbf{SCI}), which identifies the semantic-relevant and semantic-irrelevant substructure with identification guarantees..}
    \item \textcolor{black}{We evaluate the proposed method on three mainstream molecular property prediction benchmarks, which contain 21 datasets, and the proposed \textbf{SCI} achieves state-of-the-art performance.}
    \item \textcolor{black}{We show insightful visualization results and case studies, which provide meaningful motivation and potential help for researchers from the field of chemistry.}
\end{itemize}

\textcolor{black}{The rest of the paper is organized as follows. Section 2 reviews the existing studies on molecular property prediction, graph out-of-distribution classification,    and identification of generation models. Then, we propose the data generation process for molecule data in Section 3. Based on the data generation process, we provide the identification guarantees for the latent variables in Section 4. Section 5 shows the implementation details of the proposed semantic-component identification model.  And section
6 shows the experimental results as well as the insightful
visualization. We conclude the paper with future works
discussion in section 7.}

\section{Related Work}
In this section, we review the existing methods of molecular property prediction, OOD on graph data, and identification of generative models.

\subsection{Molecular Property Prediction}
Molecular property prediction \cite{liu2021transferable,zhang2021motif,lee2023exploring,stark20223d,li2022kpgt} is an important research problem in the fields of physics and chemistry \cite{butler2018machine}. 
Since recent years have witnessed the success of GNNs on structural data, several algorithms have applied GNNs to model molecules \cite{wieder2020compact}, which can be categorized into fine-grained and coarse-grained methods according to different abstract scales. 
Fine-grained methods mainly leverage low-order attributes like atoms or bonds. \cite{doi:10.1021/ci100050t} learns the representation from molecular descriptors or chemical fingerprints. Lu et.al propose the MGCN method \cite{lu2019molecular}, which employs the atom, pair-wise, and triple-wise interaction to learn the representation of molecules. 
Recently, Li et.al propose the GeomGCL \cite{li2022geomgcl} to learn the molecular representation with geometric information in a Node-Edge interactive manner. 
The coarse-grained methods mainly leverage high-order information. Considering the hierarchical structures of molecules, Zhang et.al \cite{10.1093/bioinformatics/btab195} develop a fragment-oriented multi-scale graph attention network for molecular property prediction. Hu et.al \cite{hu2020pretraining} introduce graph-level supervised and structural similarity restriction to pre-training GNNs. And Grover \cite{rong2020self} integrates the graph-level information into a Transformer-based framework. However, these methods might fail to address the distribution shift challenges. 
 

\subsection{Out-Of-Distribution Graph classification}
Our work is also related to the graph out-of-distribution problem \cite{fan2023generalizing,yang2022learning,guo2020graseq,liu2022graph,chen2022learning,10027780,cai2021graph}. 
Existing works on out-of-distribution (OOD) \cite{shen2021towards} mainly focus on the fields of computer vision \cite{zhang2021deep,zhang2022multi} and natural language processing \cite{chen2021hiddencut}, but the OOD challenge on graph-structured data receives less attention. Considering that the existing GNNs lack out-of-distribution generalization \cite{li2022graphde,zhao2020uncertainty,liu2023flood,sui2022causal,sun2022does,li2022ood}, Li et. al propose the OOD-GNN \cite{li2021ood} to address the OOD challenge by eliminating the statistical dependence between relevant and irrelevant graph representations. Considering that the spurious correlations lead to the poor generalization of GNNs, Fan et.al propose the StableGNN \cite{fan2023generalizing}, which extracts causal representation for GNNs with the help of stable learning. Aiming to mitigate the selection bias behind graph-structured data, Wu et.al propose the DIR strategy \cite{wu2022discovering} to extract the invariant causal rationales via intervention. These methods essentially employ causal effect estimation to make SI and SR independent, but they cannot theoretically guarantee that the real semantic-relevant information can be identified. \textcolor{black}{Liu et.al \cite{10.1145/3534678.3539347} employs augmentation to improve the robustness and decompose the observed graph into the environment part and the rationale part. Though using the augmentation, it is worth mentioning that our method formulates the causal generation process of the molecular data with different types of latent variables, i.e. the atom latent variables and the semantic-relevant substructures, and benefits from the flexible interaction of these latent variables. Moreover, compared with other causality-based methods, our method enjoys the advantages of the identification guarantees for latent variables.
}

\subsection{Identification of Generative Models}
Causal representation learning is gaining increasing attention as it seeks to provide deeper insights and broader applicability to deep generative models. This approach, as represented by research in \cite{scholkopf2021toward,kumar2017variational,locatello2019challenging,locatello2019disentangling,zheng2022identifiability,trauble2021disentangled}, aims to uncover the underlying factors and model the latent generation process. A conventional method for acquiring causal representation is through independent component analysis (ICA), as illustrated in works like \cite{hyvarinen2002independent,hyvarinen2013independent,zhang2008minimal,zhang2007kernel,xiemulti,comon1994independent}. ICA supposes a linear mixture model for the generative process. 
In recent work, Aapo .et.al  \cite{hyvarinen2016unsupervised,hyvarinen2017nonlinear,hyvarinen2019nonlinear,khemakhem2020variational,halva2021disentangling,halva2020hidden}, have extended identification theories to the nonlinear scenarios through the introduction of auxiliary variables such as domain indexes, time indexes, and class labels. These approaches often rely on the assumption that latent variables exhibit conditional independence and adhere to exponential families. 
Recently, Zhang et.al  \cite{kong2022partial,xiemulti}, have overcome the constraint of the exponential families assumption. They have proposed component-wise identification methods for nonlinear independent component analysis (ICA) involving a specific number of auxiliary variables. Building on these theoretical findings, Yao et.al  \cite{yao2022temporally,yao2021learning} have successfully identified time-delayed latent causal variables and their relationships from sequential data, even in situations involving stationary environments and various distribution shifts. Additionally, Xie et al. \cite{xiemulti} have utilized nonlinear ICA to reconstruct joint distributions of images from diverse domains, while Kong et.al \cite{kong2022partial} have applied the component-wise identification techniques to address challenges in domain adaptation and Li et.al \cite{li2023subspace} extend the component-wise identification to the subspace identification with milder conditions.

\begin{figure}[t]
		\centering
		\includegraphics[width=0.8\columnwidth]{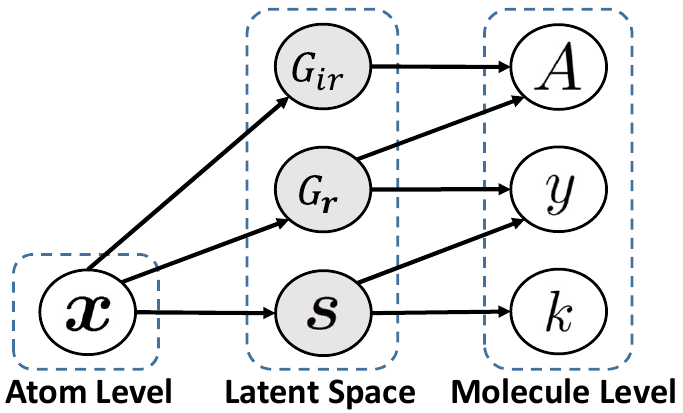}

		\caption{The illustration of the proposed causal generation process for molecule data, connects the atom and molecule views via latent space view. $\bm{s}$, $G_r$ and $G_{ir}$ denote the atom latent variables, semantic-relevant latent substructures, and semantic-irrelevant, respectively.
  }
   \label{fig:gen}
\end{figure}

\section{Data Generation Process for the Molecule Data}
\label{sec:gen}
In the field of quantum chemistry, since the molecular property can be well depicted by the semantic components, we propose a causal generation process by connecting the atom and the molecule views as shown in Figure \ref{fig:gen}.

From the atom view to the latent space, we first let $\bm{x}$ be the observed atom attributes, for example, the atomic chirality, formal charge, and whether the atom is in the ring or not. 
We further let $G_{ir}$, $G_r$ be the semantic-irrelevant substructure (i.e., the adjacency matrix of the semantic-irrelevant substructure) and the semantic-relevant latent substructures, respectively. $\bm{s}$ denote the atom latent variables that encode sufficient low-level information.  
We let $\bm{x}\!\rightarrow \!G_{ir},G_r,\bm{s}$ be the process of how $G_{ir},G_r,\bm{s}$ are generated from $\bm{x}$. 
Specifically, given functions $f_{\bm{s}},f_r,f_{ir}$, we let $\bm{s}=f_{\bm{s}}(\bm{x}), G_r=f_r(\bm{x}),G_{ir}=f_{ir}(\bm{x})$, respectively. 

From the latent space to the molecule, we let $A$ and $y$ be the observed molecule structures and molecule properties, respectively. 
The atomic number $k$ is the supervised signal of $\bm{s}$. 
Since the observed molecule structures are composed of the latent semantic-relevant substructure and the latent semantic-irrelevant substructure, we assume the observed molecule structures $A$ are generated by $G_{ir}$ and $G_r$, which is denoted by $A=g_A(G_{ir}, G_r)$ with a function $g_A$. Since the molecule properties are usually decided by the semantic components, we assume the molecule properties $y$ are controlled by the semantic latent substructures $G_r$ and the atom latent variables $\bm{s}$, which is denoted by $y=g_y(G_r,\bm{s})$. Finally, we further use $k=g_k(\bm{s})$ to represent how the atom number is decided by the node latent variables. 
In summary, we formalize the causal mechanism as follows:
\begin{align}\nonumber
    \bm{s} & = f_{\bm{s}}(\bm{x}), & G_{ir} & = f_{ir}(\bm{x}), & G_r & = f_r(\bm{x}), \\ 
    A & = g_A(G_{ir}, G_r), & y & = g_y(G_r, \bm{s}), & k & = g_k(\bm{s}).
\end{align}
Based on the aforementioned causal generation process, our goal is to learn a robust molecular property prediction model with the help of the training dataset. In other words, we aim to estimate the conditional distribution $P(y|\bm{x}, A, k)$. For a better understanding of this paper, we provide the notation description as shown in Table 1.

\newcommand{\tabincell}[2]{\begin{tabular}{@{}#1@{}}#2\end{tabular}}  
\begin{table}
	\centering
 \caption{Notation and the descriptions.}
	\label{tab:notation}
	\begin{tabular}{c|c}
		\toprule
		\small{Symbols}  & Definitions and Descriptions \\
		\midrule
		$G_{ir}, G_r$            & \tabincell{c}{Semantic-irrelevant substructure and \\ semantic-relevant substructure.} \\
		\midrule
		$x,k$            & \tabincell{c}{Observed atom attributes, \\ and atom number.} \\
		\midrule
		$\bm{s}, \hat{\bm{s}}$  & \tabincell{c}{Ground truth and estimated atom latent variables.} \\
		\midrule
		$f_{\bm{s}}, f_r, f_{ir}$  & \tabincell{c}{The functions that are used to generate atom \\ latent variables, 
   semantic-relevant and \\ semantic-irrelevant substructures, respectively.} \\
		\midrule
		$g_{A}, g_y, g_{k}$  & \tabincell{c}{The functions that are used \\ to reconstruct molecular data, 
   molecular property and \\ atom numbers, respectively.} \\
		\midrule
		$\phi$  & Invertible function.\\
		\midrule
		$\textsc{\textbf{W}}$  & \tabincell{c}{Vectors that are composed of the 1-order \\ and 2-order derivative between the \\ conditional distribution of $\mathbf{s}$ and $\mathcal{s}$.} \\ 
		\midrule
		$\bm{B}_r, \bm{B}_n$   & Parameters of multivariate Bernoulli distribution. \\
		\midrule
		$A, A'$   & Structures of observed and augmented molecule data. \\
		\midrule
		$v$   & Node (Atom) number of a graph (Molecule).\\
		\midrule
		$\psi$ & A smooth mapping, i.e.,  $\hat{\bm{B}}_h=\psi(\bm{B}_h, \bm{B}_n)$ \\
		\midrule
		$d$              & The dimension number of representation. \\
		\midrule
		$\theta_*, \omega_*, W_*$       & The parameters of our SCI model. \\
		\midrule
		$P_{B}(*;\mathcal{B})$       & \tabincell{c}{Multivariate Bernoulli distribution with \\ the parameters $\bm{B}$.} \\
		\midrule
		$E(\cdot)$         & Encoders implemented by neural architectures. \\
		\midrule
		$D(\cdot)$         & Decoders implemented by neural architectures.\\
		\midrule
		$C_h$         & The context representation of $j$ node.\\
		\midrule
		$N^i(j)$    & The $i$-order neighbourhood of the $j$-th node.\\
		\bottomrule
	\end{tabular}
\end{table}

\section{Identifiability of the Semantic Components }

\subsection{OOD Generalization V.S. Identification}

As shown in Figure 2, the spurious correlation between the semantic-irrelevant subgraph and the label occurs since the unobserved $G_{ir}$ are dependent on $y$ when $A$ is given. This is why existing GNN-based methods can hardly address the OOD challenge. 
Based on the aforementioned causal generation process, we find that the OOD generalization problem can be addressed by identifying $G_r$ and $\bm{s}$. This is because $y$ is independent of $G_{ir}$, when $G_r$ is given, and the spurious correlations are removed. Therefore, we can formulate the OOD generalization problem as an identification problem of latent semantic components.

According to the above discussion, to obtain the robust molecular property prediction model, we first identify the distributions of the latent semantic-relevant substructure $G_r$ and the atom latent variables $\bm{s}$ and then estimate $P(y|G_r, \bm{s})$. 
\textcolor{black}{To achieve this, we use causal identification theories to guarantee that the distributions of these latent variables can be estimated from the observed data. }

\subsection{Identifying the atom latent variables $\mathbf{s}$.}

In this subsection, we show that the distribution of atom latent variables $\bm{s}$ can be reconstructed theoretically with the help of a component-wise identification theory.
In detail, given any true atom latent variable $s_{i}$, there exists a corresponding estimated variable $\hat{s}_j$ and an invertible function $\phi:\mathbb{R}\rightarrow\mathbb{R}$, such that $\hat{s}_j=\phi(s_i)$. Based on the proposed data generation process, we can prove that $\hat{s}_i$ is component-wise identifiable, which is shown as follows \textcolor{black}{(proof is provided in Appendix A)}.

\begin{theorem}
\label{the:component}
    \textbf{\textit{(Atom Latent Variables ($\bm{s}$) Identification) }}
    We follow the causal mechanism shown in Figure 2 and make the following assumptions:
    \begin{itemize}
        \item A1 (\underline{Smooth and Positive Density}): The probability density function of atom latent variables is smooth and positive, i.e. $P(\bm{s}|\bm{x}) > 0$.
        \item A2 (\underline{Conditional independence}): Conditioned on $\bm{x}$, each $s_i$ is independent of any other $s_j$ for $i,j\in [n],i\neq j$, i.e, $log P(\bm{s}|\bm{x})=\sum_i^n P(s_i|\bm{x})$.
        \item A3 (\underline{Linear independence}): For any $\bm{s} \in \mathcal{S} \subseteq \mathbb{R}^n$, where $\mathcal{S}$ is the range of $\bm{s}$ and $n$ is the dimension of $\bm{s}$, there exist $2n+1$ values of $\bm{x}$, i.e., $\bm{x}_j$ with $j=0,1,...,2n$, such that the $2n$ vectors $\textbf{\textsc{w}}(\bm{s},x_j)-\textbf{\textsc{w}}(\bm{s},x_0)$ with $j=1,...2n$, are linearly independent, where vector $\textbf{\textsc{w}}(\bm{s},\bm{x})$ is formalized as follows:
        \begin{equation}
        \label{equ:the_ind}
        \begin{split}
             \textbf{\textsc{w}}(\bm{s},x_j)=(&\frac{\partial\log P(s_0|\bm{x})}{\partial s_0},... \frac{\partial\log P(s_n|\bm{x})}{\partial s_n},...\\&\frac{\partial^2\log P(s_0|\bm{x})}{\partial^2 s_0},...\frac{\partial^2\log P(s_n|\bm{x})}{\partial^2 s_n}).
        \end{split}
        \end{equation}
    \end{itemize}
    If a learned generative model $(\hat{f}_{ir},\hat{f}_r,\hat{f}_{ir},\hat{g}_A,\hat{g}_y,\hat{g}_k)$ assumes the same generation process shown in Figure 2 and matches the truth-conditional distribution, i.e. $P(\hat{k}|\bm{x})=P(k|\bm{x})$ and $\hat{k}$ denote the estimated variables, then the identifiability of the atom latent variables $\bm{s}$ is ensured, i.e., the ground-truth atom latent variables can be learned.
\end{theorem}

\textcolor{black}{\textbf{Proof Sketch: }The proof the Theorem \ref{the:component} can be separated into three steps. First, we construct an invertible transformation $h$ between the estimated latent variables $\hat{s}$ and the ground truth latent variables $s$. Second, we use the variance of different types of atoms to build a full-rank linear system. Based on A3, there is only one solution in the aforementioned full-rank linear system, i.e., $\frac{\partial s_i}{\partial \hat{s}_j}=0, i,j \in \{1,2,\cdots, n\}$. Since the Jacobian of $h$ is invertible, for each $s_i, i, j\in \{1,2,\cdots, n\}$, there exist a $h_i$ such that $s_i=h_i(\hat{s}_j)$ and $s_i$ is component-wise identifiable.}

\textcolor{black}{\textbf{Discussion}: The identification results shown in Theorem \ref{the:component} use the same three assumptions in the nonlinear ICA literature \cite{khemakhem2020variational, hyvarinen2017nonlinear}, which are also applicable to molecular property prediction scenarios. A1 holds since each atom has unique properties and if each atom exists in the dataset, then $p(s|x)>0$. A2 holds as properties of atoms i.e., radioactivity and stability are independent. A3 means that $f_{\bm{s}}(\bm{x})$ varies sufficiently over $\bm{x}$, which can be satisfied when we use enough data to model the data generation process. Therefore, based on the data generation process in Figure 2, the molecular property prediction task meets the aforementioned assumptions, and hence the atom latent variables $\bm{s}$ are component-wise identifiable.}

Based on Theorem 1, we can make sure that $\bm{s}$ can be reconstructed theoretically by modeling the causal mechanism shown in Equation (1). We will provide the implementation details in the next section by assuming that $\bm{s}$ follows the factorized Gaussian distributions to meet the assumption.

\subsection{Identifying the distribution of the latent semantic-relevant substructure $\mathbf{G}_{r}$.}

In this subsection, we show that the distribution of $G_r$ can be reconstructed theoretically. 
\textcolor{black}{We first assume that the truth semantic-relevant graphs follow the multivariate Bernoulli distribution with the parameters of $\bm{B}_r \in (0,1)^{v\times v}$, i.e. $G_r \sim P_B(G_r;\bm{B}_r)$, where $v$ is the node number of $G_r$. Since the ground-truth $G_r$ is sampled from $P_B(G_r;\bm{B}_r)$, the identification of the distribution of $G_r$ equals to identify the parameters $\bm{B}_r$. In the following, we will show the identification results of $\bm{B}_r$.
}

Motivated by the block-identification theory \cite{von2021self}, the latent variables can be theoretically identified given the joint distribution of the pairwise samples. Therefore, we can estimate $P(A, A')$ by leveraging $A'=g_A(G_{ir}', G_r)$ to generate the augmented molecular structures $A'$ with different $G_{ir}'$ as input. After estimating $P(A, A')$, we prove that $G_r$ is identifiable with a semantic-relevant regularization term, which is shown as follows (Detailed proof can be found in Appendix B.).

\begin{theorem}
\label{the:block}
\textbf{\textit{(Semantic-relevant Substructure distribution ($\bm{B}_r$) Identification) }}
    We follow the causal generation process shown in Figure 3 and make the following assumptions:
    \begin{itemize}
        \item  A1 (\underline{Smooth and Invertible Generation Process}): $g_A:G_{ir}, G_r \rightarrow A$ is smooth and invertible.
        \item A2 (\underline{Smooth, Continuous and Positive Density}): $P(G_{ir}, G_r)$ is a smooth, continuous density with $P(G_{ir}, G_r)>0$ almost everywhere.
        \item A3 (\underline{Smooth and Positive Conditional Probability}) The conditional probability density function $P(G_{ir}'|G_{ir})$ is smooth \textit{w.r.t} both $G_{ir}$ and $G_{ir}'$; for any $G_{ir}$, $P(\cdot|G_{ir})>0$ in some open, non-empty subset containing $G_{ir}$.
        \item A4 (\underline{Identical data generation process}) A learned generative model $(\hat{f}_{\bm{s}},\hat{f}_r,\hat{f}_{ir},\hat{g}_A,\hat{g}_y,\hat{g}_k)$ assumes the same generation process shown in Equation (1).
    \end{itemize}
    Let $v$ be the node number and let $\tau: A \rightarrow (0,1)^{v\times v}$ be any smooth function. Then $\bm{B}_{r}$ can be identified by minimizing the following restriction:
    \begin{equation}
    \label{equ:block_regu}
\mathcal{L}_r=\mathbb{E}_{(A,A')\sim \hat{P}(A,A')}\left[||\tau(A)-\tau(A')||^2_2\right]-H(\tau(A)),
    \end{equation}
    where $A'$ denotes the augmented sample from the proposed causal generation process and $H(\cdot)$ denotes the differential entropy of the random variables $\tau_x(A)$. When $\bm{B}_{r}$ is identified, we can obtain $G_{r}$ by sampling from $P(G_{r};\bm{B}_{r})$.
\end{theorem}

\textcolor{black}{\textbf{Proof Sketch: }The proof of Theorem \ref{the:block} can be separated into three steps. First, we demonstrate that the $\hat{\bm{B}}_r$ extracted by a smooth function by minimizing Equation (\ref{equ:block_regu}) is related to the ground-truth $\bm{B}_r, \bm{B}_{ir}$ through a smooth mapping $\psi$, i.e., $\hat{\bm{B}}_r=\psi(\bm{B}_r, \bm{B}_{ir})$. Second, we show that $\hat{\bm{B}}_r=\psi(\bm{B}_r, \bm{B}_{ir})$ can only depend on the true $\bm{B}_r$ and not on $\bm{B}_{ir}$, i.e., $\hat{\bm{B}}_r=\psi(\bm{B}_r)$ by contradiction. Third, we show that $\psi$ is bijection by using the results from \cite{zimmermann2021contrastive}. }

\textcolor{black}{\textbf{Discussion:} The identification results of Theorem \ref{the:block} means that $\bm{B}_{r}$ can be reconstructed so distribution $P(G_r;\bm{B}_{r})$ can be learned. Based on Theorem 2, we can make sure that $G_{ir}$ can be reconstructed theoretically with the help of the restriction term in Equation (3). Note that we also employ the same assumptions in block-identification \cite{von2021self}, which are also applicable to molecular property prediction scenarios. Since the ground truth $G_r, G_{ir}$ can be extracted by expert priors and manually synthesized new molecules with them, then A1 holds. A2 holds when each functional group exists in the molecule dataset, which can be easily satisfied when the dataset is large enough. Since $G_{ir}$ can be replaced by $G_{ir}'$ given any molecule (functional groups replacement in chemistry), so $P(G_{ir}'|G_{ir})>0$ and $P(G_{ir}', G_{ir})>0$, which make A3 satisfied. Finally, A4 holds since we can model the data generation process with enough data. 
In summary, with the restriction in Equation (3), the molecular property prediction task meets these assumptions, and hence the distribution of the latent semantic-relevant substructure is block-wise identifiable.}

By combining Theorem 1 and 2, we can prove that the semantic components in the generation process of molecule data are identifiable.

\begin{figure*}[t]
		\centering
	\label{fig:model}	\includegraphics[width=2.0\columnwidth]{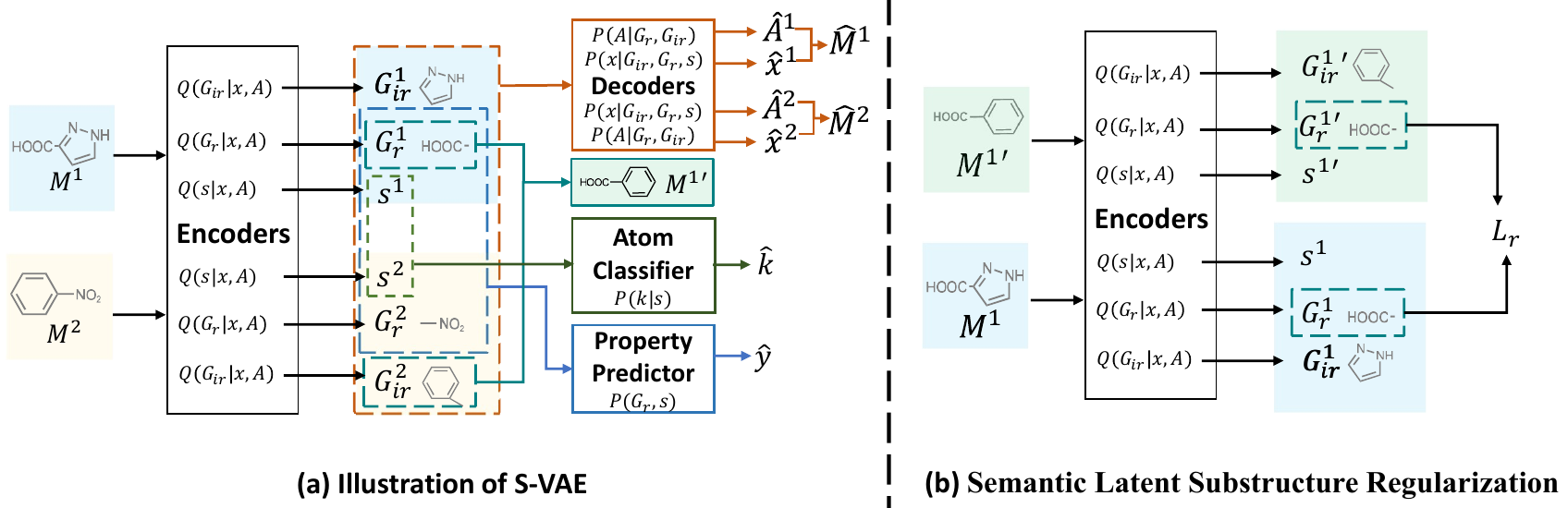}
		\caption{The illustration of the proposed semantic-component identification model. (a) The illustration of the proposed S-VAE. The ``Encoders'' block, which contains $Q(G_r|x,A),Q(G_{ir}|x,A)$ and $Q(s|A)$, is used to reconstruct the latent variables. The ``Decoders'' block, which contains $P(A|G_r, G_{ir})$ and $P(x|G_{ir},G_r,s$, is used to reconstruct the molecule data. The light green block denotes the augmented data. $\hat{M}^*$, $\hat{k}$, and $\hat{y}$ denote the reconstructed molecule data, predicted atom number, and the predicted molecular property, respectively. (b) The semantic latent substructures regularization employs the same encoders and restricts the similarity between $G_r^1$ and ${G_r^1}'$ from the original and augmented data, respectively. \textit{(Best view in color. )}}
  \label{equ:elbo}
\end{figure*}

\section{Semantic Components Identification Model}

\subsection{\textcolor{black}{Model Overview}}

Based on the theoretical results, we propose the SCI model, which contains the semantic-relevant variational auto-encoder (S-VAE) and the semantic latent substructure regularization in Figure 3. 
The S-VAE model the data generation process to reconstruct the latent variables $\bm{s}, G_r, G_{ir}$ and further disentangle the atom variables, which corresponds to Theorem 1 and is shown in Figure 3 (a). Moreover, based on the proposed S-VAE, we first generate augmented data and then apply the semantic-relevant regularization on the estimated semantic-relevant substructures for block-wise identification guarantees as shown in Figure 3 (b).



\subsection{Semantic-relevant Variational Autoencoder}
We first show the details of the semantic-relevant variational autoencoder (S-VAE). According to Theorem 1, we can identify the distribution of the atom latent variables $\bm{s}$ by modeling the proposed data generation process. 
To achieve this, we propose the semantic-relevant variational auto-encoder (S-VAE). We first model the joint distribution of $\bm{x}, y, A$ and $k$ with the help of stochastic variational inference and derive the evidence lower bound (ELBO) as shown in Equation (4) (Derivation details in Appendix C.).

  %

\begin{equation}
    \begin{split}
        \mathcal{L}_{ELBO}&=-D_{KL}(Q(G_r|\bm{x},A)||P(G_r))\\
        &-D_{KL}(Q(G_{ir}|\bm{x},A)||P(G_{ir}|G_r,\bm{s}))\\
        &-D_{KL}(Q(\bm{s}|\bm{x})||P(\bm{s}|G_r))\\
        &+E_{Q(G_r|\bm{x},A)}E_{Q(G_{ir}|\bm{x},A)}\ln P(A|G_r,G_{ir})\\
        &+E_{Q(G_r|\bm{x},A)}E_{Q(G_{ir}|\bm{x},A)}E_{Q(\bm{s}|\bm{x})}\ln P(\bm{x}|G_r,G_{ir},\bm{s})\\
        &+E_{Q(\bm{s}|\bm{x})}\ln P(k|\bm{s})\\
        &+E_{Q(G_r|\bm{x},A)}E_{Q(\bm{s}|\bm{x})}\ln P(y|G_r,\bm{s}),
    \end{split}
\end{equation}
where $D_{KL}(\cdot|\cdot)$ denotes the Kullback-Leibler divergence; $Q(G_r|\bm{x},A)$, $Q(G_{ir}|\bm{x},A)$ and $Q(\bm{s}|\bm{x})$ are used to approximate the distribution of $G_r$, $G_{ir}$ and $\bm{s}$, \textcolor{black}{which correspond to the ``Encoders'' block in the left side of Figure 3 (a)}; $P(A|G_r,G_{ir})$, $P(\bm{x}|G_r,G_{ir},\bm{s})$ denote the graph structure decoder, which correspond to the ``Decoders'' block in the right side of Figure 3 (a). And $P(k|\bm{s})$ and $P(y|G_r,\bm{s})$ the atom label decoder and the molecular property predictor, respectively. Implementation details of these components are shown as follows. 


\subsubsection{Implementation of $Q(G_r|\mathbf{x},A)$ and $Q(G_{ir}|x,A)$.} 


In this part, we aim to obtain $G_{r}$ and $G_{ir}$ by sampling from $Q(G_r|\bm{x},A)$ and $Q(G_{ir}|\bm{x},A)$, respectively. We first assume that $G_h$ and $G_n$ follow multivariate Bernoulli distributions with the parameters $\hat{\bm{B}}_r$ and $\hat{\bm{B}}_{ir}$, i.e., $Q(G_r|\bm{x},A):=P_{B}(G_r;\hat{\bm{B}}_r)$ and $Q(G_{ir}|\bm{x},A):=P_{B}(G_{ir};\hat{\bm{B}}_{ir})$.

Technologically, we estimate the parameters of the multivariate Bernoulli distribution $Q(G_r|\bm{x},A)$ and $Q(G_{ir}|\bm{x},A)$ in three steps. First, we use two layers of graph convolution networks (GCNs) to extract the node embeddings $\bm{Z}_r$ and $\bm{Z}_{ir}$, respectively. Second, we calculate the parameter matrices $\hat{\bm{B}}_r,\hat{\bm{B}}_{ir}$, which denote the probability of the existence of each edge of $G_r$ and $G_{ir}$. Third, we sample $G_r$ and $G_{ir}$ from the estimated distributions. In summary, the aforementioned three steps can be formalized as follows:
\begin{equation}
    \begin{split}
        \bm{Z}_r &= E_r(\bm{x},A,\theta_r),  \hat{\bm{B}}_r=\sigma({\bm{Z}_{r}} \bm{Z}_{r}^\mathsf{T}),  G_r \sim P(G_r;\hat{\bm{B}}_r),\\
        \bm{Z}_{ir} &= E_{ir}(\bm{x},A,\theta_{ir}),  \hat{\bm{B}}_{ir}=\sigma({\bm{Z}_{{ir}}} \bm{Z}_{{ir}}^\mathsf{T}), G_{ir}\sim P(G_{ir};\hat{\bm{B}}_{ir}),
    \end{split}
\end{equation}
where $E_r$ and $E_{ir}$ denote the feature extractors with their corresponding training parameters $\theta_r, \theta_{ir}$; $\bm{Z}_r,\bm{Z}_{ir}$ denote the feature matrices; $\sigma(\cdot)$ is the Sigmoid function. Here we employ Gumbel-Softmax \cite{jang2016categorical} to sample $G_r$ and $G_{ir}$, respectively. Hence, $D_{KL}(Q(G_{ir}|\bm{x},A)||P(G_{ir}|G_r,\bm{s}))$ and $D_{KL}(Q(G_r|\bm{x}, A)||P(G_r))$ in Equation (4) denote the Kullback-Leibler divergence of the Bernoulli distributions.

\subsubsection{Implementation of $Q(\mathbf{}{s}|\mathbf{x})$.} In this subsection, we aim to obtain $\bm{s}$ by sampling from $Q(\bm{s}|\bm{x})$. Therefore, we first assume that $\bm{s}$ follow the Gaussian distributions with parameters $\mu$ and $\sigma$, i.e., $Q(\bm{s}|\bm{x})=\mathcal{N}(\mu,\sigma)$.
Sequentially, we employ three layers of Multilayer Perceptron (MLPs) to estimate $\mu$ and $\sigma$, which are shown as follows:
\begin{equation}
\mu=E_\mu(\bm{x};\theta_\mu),\sigma=E_\sigma(\bm{x};\theta_\sigma),
\end{equation}
in which $\theta_\mu$ and $\theta_\sigma$ are the training parameters. Hence, $D_{KL}(Q(\bm{s}|\bm{x})||P(\bm{s}|G_h))$ in Equation (4) denotes the Kullback-Leibler divergence of Gaussian distributions.

\subsubsection{Implementation of $P(A|G_r,G_{ir})$.}
$P(A|G_r,G_{ir})$ is used to generate the molecule structures with the help of $G_{ir}$ and $G_r$. Formally, we have:
\begin{equation}
\hat{A} = D_A(G_r, G_{ir};\bm{\omega}_A),
\end{equation}
in which $\hat{A}$ is the reconstructed structure; $D_A$ is implemented by a MLP with Sigmoid activated function; and $\bm{\omega}_A$ denote the training parameters of $D_A$. 

\subsubsection{Implementation of $P(\mathbf{x}|G_r,G_{ir},\mathbf{s})$.}
$P(\bm{x}|G_r, G_{ir},\bm{s})$ is used to reconstruct the observed atom features, and we have:
\begin{equation}
    \hat{x} = D_x(G_r, G_{ir}, \bm{s};\bm{\omega}_{\bm{x}}),
\end{equation}
in which $\hat{x}$ denote the reconstructed atom feature; $D_x$ is also implemented by an MLP; and $\bm{\omega}_x$ denote the training parameters of $D_x$.

\subsubsection{Implementation of $P(k|\mathbf{s})$.} 
$P(k|\bm{s})$ is used to estimate the label of each atom (i.e., molecule number), we employ an MLP to predict $k$, which is shown as follows:
\begin{equation}
    \hat{k} = D_k(\bm{s};\bm{\omega}_k),
\end{equation}
where $\hat{k}$ is the predicted label; $D_K$ is implemented with an MLP layer and $\bm{\omega}_k$ are the training parameters of $D_k$.

\subsubsection{Implementation of $P(y|G_r, \mathbf{s})$.}

\textcolor{black}{$P(y|G_r, \bm{s})$ is used to predict the molecular properties with the help of $G_r$ and $\bm{s}$, we employ a GAT-based architecture to predict $\hat{y}$, which are calculated via two steps.}

\textcolor{black}{First, we employ multi-hop graph attention (GAT) layers to aggregate the features of each atom node with the help of the molecular structures. Specifically, for the $i$-th GAT layer, we consider the $i$-th neighbors for feature aggregation, which is shown as follows.}
\begin{equation}
\begin{split}
a_{jk}&=softmax(leaky\_relu(W_1\cdot[h^{i-1}_j,h^{i-1}_k]))\\
    C_{j}&=elu(\sum_{k\in N^i(j)}a_{jk}\cdot W_2 \cdot     h^{i-1}_j)\\
    h^i_j &= GRU(C_j, h^{i-1}_j;W_3),
\end{split}
\end{equation}
\textcolor{black}{where $j,k$ denote the node index, and $N^i(j)$ denote the $i$-th order neighbors of the $j$-th node; $W_1, W_2, W_3$ denotes the training parameters and $GRU$ denotes the gated recurrent units; $h_j^i$ is the node representation of the $j$-th node (atom) with $i$-th order aggregation. In our experiment, we let the maximum of $i$ be 3.}

\textcolor{black}{Second, we calculate the graph-level representation. To achieve this, we employ the idea of the hypergraph and assume that a supernode is connected with all the nodes of $G_r$, so the representation of this supernode can be considered as the graph-level representation. We also employ one layer of GAT to extract the graph-level representation and an MLP for final prediction. Formally, we have:}
\begin{equation}
    \hat{y}=D_y(GAT([\bm{s},h^i], G_r);W_4),
\end{equation}
\textcolor{black}{where $\hat{y}$ denotes the predicted label; $D_y$ denotes an MLP with training parameters $W_4$.}


\subsection{Semantic Latent Substructure Regularization}
In this subsection, we introduce the details of semantic latent substructure regularization as shown in Figure 3 (b). According to Theorem 2, we can identify the distribution of $G_r$ by employing the regularization as shown in Equation (3). 
To achieve this, we should first obtain the augmented sample $A'$, and 
then the implementation of $Q(G_r|\bm{x}, A)$ can substitute for the function $\tau$ in Equation (3).

For the augmented sample $A'$, since the proposed data generation process can be modeled by maximizing Equation (4) with mini-batch optimization, we can generate $A'$ in two steps. First, we obtain $G_{ir}'$ by shuffling $G_{ir}$ in each mini-batch. Second, using the same $G_r$, we employ the implementation of $P(A|G_r,G_{ir})$ as shown in Equation (7) to generate $A'$. These two steps can be formalized as follows:
\begin{equation}
\begin{split}
    G_{ir}'=\text{Shuffle}(G_{ir}), \quad
    A' = C_A(G_r, G_{ir}';\phi_A).
\end{split}
\end{equation}
Sequentially, we can rewrite Equation (3) by combining Equation (5) and Equation (3) as follows:
\begin{equation}
\begin{split}
 \mathcal{L}_r=&\mathbb{E}_{(A,A')\sim \hat{P}(A,A')}\left[||\hat{\bm{B}_r}-\hat{\bm{B}}'_r||^2_2\right]-H(\hat{\bm{B}}_r),\\
 &\bm{Z}_r = E_r(\bm{x},A,\theta_h),  \hat{\bm{B}_r}=\sigma({\bm{Z}_{r}} {\bm{Z}_{r}}^\mathrm{T}), \\
 &\bm{Z'}_r = E_h(\bm{x},A',\theta_h).  \hat{\bm{B'}_r}=\sigma({\bm{Z'}_{r}} {\bm{Z'}_r}^\mathrm{T}). 
\end{split}
\end{equation}



\begin{table*}[]
\centering
\caption{\textcolor{black}{The statistics of the datasets. \#Graph denotes the number of molecules (graph) in each dataset. Average \#Nodes and average \#Edges denote the average number of atoms and chemical bonds in each dataset.}}
\label{tab:dataset_statictic}
\resizebox{\textwidth}{!}{
\begin{tabular}{@{}c|ccccccc@{}}
\toprule
Category                     & Name                 & \#Graphs & Average \#Nodes & Average \#Edges & Task Type             & Split Method & Metric  \\ \midrule
\multirow{10}{*}{OGBG}       & HIV                  & 41127  & 25.5          & 54.9          & Binary Classification & scaffold     & ROC-AUC \\
 & BACE                  & 1513   & 34.1 & 73.7 & Binary Classification & scaffold & ROC-AUC \\
 & BBBP                  & 2039   & 24.1 & 51.9 & Binary Classification & scaffold & ROC-AUC \\
 & ClinTox               & 1477   & 26.2 & 55.8 & Binary Classification & scaffold & ROC-AUC \\
 & Tox21                 & 7831   & 18.6 & 38.6 & Binary Classification & scaffold & ROC-AUC \\
 & SIDER                 & 1427   & 33.6 & 70.7 & Binary Classification & scaffold & ROC-AUC \\
 & toxcast               & 8576   & 18.8 & 38.5 & Binary Classification & scaffold & ROC-AUC \\
 & esol                  & 1128   & 13.3 & 27.4 & Regression            & scaffold & RMSE    \\
 & lipo                  & 4200   & 27.0   & 59.0   & Regression            & scaffold & RMSE    \\
 & freesolv              & 642    & 8.7  & 16.8 & Regression            & scaffold & RMSE    \\ \midrule
\multirow{4}{*}{GOOD}        & \multirow{2}{*}{hiv} & 32903  & 25.3          & 54.4          & Binary Classification & scaffold     & ROC-AUC \\
 &                       & 32903  & 24.9 & 53.6 & Binary Classification & size     & ROC-AUC \\
 & \multirow{2}{*}{zinc} & 199565 & 23.1 & 49.8 & Regression            & scaffold & MAE     \\
 &                       & 199565 & 22.8 & 49.1 & Regression            & size     & MAE     \\ \midrule
\multirow{7}{*}{MoleculeNet} & BBBP                 & 2039   & 24.1          & 26.0            & Binary Classification          & Scaffold     & ROC-AUC \\
 & ClinTox               & 1478   & 26.2 & 27.9 & Binary Classification          & Scaffold & ROC-AUC \\
 & Tox21                 & 7831   & 18.6 & 19.3 & Binary Classification          & Scaffold & ROC-AUC \\
 & SIDER                 & 1427   & 33.6 & 35.4 & Binary Classification          & Scaffold & ROC-AUC \\
 & BACE                  & 1513   & 34.1 & 36.9 & Binary Classification          & Scaffold & ROC-AUC \\
 & MUV                   & 93127  & 24.2 & 26.3 & Binary Classification          & Scaffold & ROC-AUC \\
 & HIV                   & 41127  & 25.5 & 27.5 & Binary Classification          & Scaffold & ROC-AUC \\ \bottomrule
\end{tabular}}
\end{table*}

By combining the S-VAE and the semantic latent substructure regularization, we formalize the total loss of the proposed SCI method as follows:
\begin{equation}
\label{equ:total_loss}
    \mathcal{L}_{total}=-\mathcal{L}_{ELBO} + \alpha \mathcal{L}_r + \beta \mathcal{L}_h+\gamma\mathcal{L}_n,
\end{equation}
where $\mathcal{L}_h$ and $\mathcal{L}_n$ denote the L1-Norm sparsity regularization of $G_h$ and $G_n$, respectively; $\alpha$, $\beta$ and $\gamma$ denote the hyper-parameters. 

\section{Experiments}


\subsection{Setup}
\subsubsection{Realworld Datasets}
To evaluate the OOD performance, we conduct experiments on several molecular property prediction tasks from three different benchmarks as follows.

\begin{itemize}
    \item Open Graph Benchmark (OGB)\cite{hu2020open} \footnote{https://ogb.stanford.edu/} is a collection of realistic, large-scale, and diverse benchmark datasets for machine learning on graphs. It contains different types of datasets for graph data. We select the graph property prediction task, which includes 8 classification tasks and 3 regression tasks.
    \item Graph Out-of-Distribution (GOOD) benchmark dataset \cite{gui2022good} \footnote{https://github.com/divelab/GOOD} is a systematic benchmark for graph out-of-distribution problem, which contains 11 datasets with 17 domain selections, in which we choose two molecular property prediction tasks with two domain selection methods for the out-of-distribution setting.
    \item MoleculeNet \cite{wu2018moleculenet} \footnote{https://moleculenet.org/datasets-1} is a large scale benchmark for molecular machine learning. For each task, we split it with scaffold split, which splits molecules according to 
their scaffold (molecular substructure). Due to the scaffold split method, the distribution of train and test datasets is different, so it is an out-of-distribution problem of molecular property prediction.
\end{itemize}


All molecules in these datasets are pre-processed using RDKit \cite{landrum2006rdkit}. 
We consider molecules as graphs, where the nodes are atoms, and the edges are chemical bonds. Each benchmark contains the atom attributes information. For example, in the OGBG dataset, the observed atom attributes are 9-dimensional vectors, which contain atomic, chirality, and other additional atomic features such as formal charge and whether the atom is in a ring.
Based on the properties of the molecules, these datasets can be divided into three subtasks: binary classification, multi-label classification, and regression. During preprocessing, these datasets employ a scaffold or size-splitting procedure to split molecules based on their two-dimensional structural framework. Scaffold splitting is a common split method and it attempts to separate structurally different molecules into different subsets, which provides a more realistic estimate of the model performance in perspective experimental settings. This pre-processing procedure will inevitably introduce spurious correlations between functional groups due to the selection bias of the training set. The details about the statistics of each dataset are shown in Table 2.

\subsubsection{Evaluation Setting and Metrics}

We use ROC-AUC as the metric for binary classification and multi-label classification tasks and use Root Mean Square Error (RMSE) or Mean Average Error (MAE) for regression tasks. We choose the model with the best validation and evaluate the chosen model on the test set. 
For each method, we use several different random seeds and report the mean and standard error.

\subsubsection{Baselines and Model Variants}

\textcolor{black}{We compare the proposed SCI method with three kinds of baselines. Besides the conventional methods based on graph neural networks (GNN), we also take the methods that are devised for molecular property prediction tasks. Since our method uses the technique of causal inference, we also consider the causality-based methods and other methods devised for graph OOD generalization.}

\begin{table*}[ht]
\centering
\caption{The ROC-AUC of the compared methods on seven molecular property classification tasks of the OGB dataset. The values presented are averaged over four replicates with different random seeds. Values in the parenthesis denote the standard errors.}
\label{tab:ogb_cls}
\begin{tabular}{@{}c|ccccccc@{}}
\toprule
\textbf{Model}           & Molhiv         & Molbace        & Molbbbp        & Molclintox    & Moltox21       & Molsider       & Moltoxcast     \\ \midrule
\textbf{GCN}             & 0.7580(0.0197)  & 0.7689(0.0323) & 0.6974(0.0153) & 0.9027(0.0134) & 0.7456(0.0035) & 0.5843(0.0034) & 0.6421(0.0069) \\
\textbf{GAT}             & 0.7652(0.0069) & 0.8124(0.0140) & 0.6864(0.0298) & 0.8798(0.0011) & 0.7492(0.0066) & 0.5956(0.0102) & 0.6466(0.0028) \\
\textbf{GraphSAGE}       & 0.7747(0.0115) & 0.7425(0.0248) & 0.6805(0.0126) & 0.8877(0.0066) & 0.7410(0.0035) & 0.6059(0.0016) & 0.6282(0.0067) \\
\textbf{GIN}             & 0.7852(0.0158) & 0.7638(0.0387) & 0.6748(0.0063) & 0.9155(0.0212) & 0.7440(0.0040) & 0.5817(0.0124) & 0.6342(0.0102) \\
\textbf{GIN0}            & 0.7814(0.0121) & 0.7584(0.0239) & 0.6611(0.0094) & 0.9212(0.0255) & 0.7490(0.0015) & 0.5968(0.0148) & 0.6289(0.0019) \\
\textbf{SGC}             & 0.6342(0.0016) & 0.6875(0.0021) & 0.6613(0.0039) & 0.8536(0.0028) & 0.7222(0.0005) & 0.5906(0.0032) & 0.6283(0.0010)  \\
\textbf{JKNet}           & 0.7534(0.0123) & 0.7425(0.0291) & 0.6930(0.0075)  & 0.8558(0.0217) & 0.7418(0.0029) & 0.5818(0.0159) & 0.6357(0.0055) \\
\textbf{DIFFPOOL}        & 0.6408(0.0497) & 0.7525(0.0116) & 0.6935(0.0189) & 0.8241(0.0167) & 0.7325(0.0084) & 0.5758(0.0151) & 0.6217(0.0054) \\
\textbf{CMPNN}           & 0.7711(0.0071) & 0.7215(0.0490)  & 0.6403(0.0172) & 0.7947(0.0461) & 0.7048(0.0107) & 0.5799(0.0080)  & 0.6394(0.0105) \\
\textbf{DIR}             & 0.7672(0.0084) & 0.7834(0.0145) & 0.6467(0.0174) & 0.8129(0.0307) & 0.6966(0.0286) & 0.5794(0.0111) & 0.6196(0.0135) \\
\textbf{StableGNN-GCN}   & 0.7779(0.0119) & 07695(0.0327)  & 0.6882(0.0387) & 0.8798(0.0237) & 0.7312(0.0034) & 0.5915(0.0117) & 0.6329(0.0069) \\
\textbf{StableGNN-Graph} & 0.7763(0.0079) & 0.8073(0.0398) & 0.6847(0.0247) & 0.9096(0.0193) & 0.6914(0.0024) & 0.5589(0.0056) & 0.7986(0.0010) \\
\textbf{AttentiveFP}     & 0.7780(0.0195)  & 0.7767(0.0026)  & 0.6555(0.0128) & 0.8335(0.0216) & 0.7934(0.0028) & 
0.6919(0.0148) & 0.7678(0.0037) \\
\textbf{OOD-GNN}     & 0.7950(0.0080)  & 0.8130(0.0120)  & 0.7010(0.0100) & 0.9140(0.0130) & 0.7840(0.0800) & 0.6400(0.0130) & 0.7870(0.0030) \\
\textbf{GIL}             & 0.7908(0.0054) & / & / & / & / & 0.6350(0.0057) & / \\
\textbf{GREA}             & 0.7932(0.0092) & 0.8237(0.0237) & 0.6970(0.0128) & 0.8789(0.0368) & 0.7723(0.0119) & 0.6014(0.0204) & 0.6732(0.0092) \\
\textbf{FFiNet}          & 0.7722(0.0157) & 0.8147(0.0325) & 0.6962(0.0060)  & 0.8151(0.0454) & 0.8009(0.0008) & / & 0.7712(0.0048) \\
\textbf{PharmHGT}        & 0.7451(0.0080)  & 0.8204(0.0121) & 0.7275(0.0170)  & 0.9098(0.0084) & 0.7443(0.0113) & 0.6143(0.0201) & 0.8023(0.0009) \\ \midrule
\textbf{SCI} &
  \textbf{0.7959(0.7976)} &
  \textbf{0.8309(0.1168)} &
  \textbf{0.7300(0.0293)} &
  \textbf{0.9220(0.0258)} &
  \textbf{0.8027(0.0071)} &
  \textbf{0.7152(0.0055)} &
  \textbf{0.8073(0.0006)} \\ \bottomrule
\end{tabular}
\end{table*}

\textcolor{black}{For the conventional GNN-based methods, we first consider the popular graph neural architectures like GCN \cite{kipf2016semi}, GAT \cite{velickovic2017graph}, GraphSAGE \cite{hamilton2017inductive}, and GIN  \cite{xu2018powerful}, which considers the molecules as graph data and extract the graph representation by averaging the node representations. Besides, we further consider the graph neural networks for graph representations like SGC \cite{wu2019simplifying}, JKNet \cite{xu2018representation}, DIFFPOOL \cite{ying2018hierarchical}, and CMPNN \cite{song2020communicative}, which leverage different neighborhood ranges or a differentiable graph pooling module to extract the node-level or edge-level representations and further the graph-level representations.}

\textcolor{black}{Moreover, we consider the methods that are devised for molecular property prediction. For example, AttentiveFP \cite{xiong2019pushing} is a variant of graph attention networks, which learns molecular representation from atom and molecular levels. FFiNet \cite{ren2023force} is a
force field-inspired neural architecture that includes all the interactions by incorporating the functional
form of the potential energy of molecules. PharmHGT \cite{jiang2023pharmacophoric} leverages a pharmacophoric-constrained multiviews molecular representation graph to extract vital chemical information from functional substructures and chemical reactions.}


\textcolor{black}{We further consider baselines that are devised for graph out-of-distribution classification or molecular property predictions. First, we consider the OOD-GNN \cite{li2021ood}, which eliminates the statistical dependence between relevant and
irrelevant graph representation by using random Fourier features. Besides, we also consider the augmentations-based method GREA \cite{liu2022graph} to show the effectiveness of the block-identification guarantee. Moreover, we consider GIL \cite{li2022learning} and LECI \cite{gui2023joint}, which capture the invariant relationships with the help of environment inference and use the environment information to learn the causally invariant substructures, respectively. We also consider GSAT \cite{miao2022interpretable}, which employs a stochastic attention mechanism for interpretable and generalization graph learning. We further consider the causality-based methods to show the advantages of our identification results. For example, we consider DIR \cite{wu2022discovering} and StableGNN \cite{fan2023generalizing}, which leverage the structural causal model to learn the domain-invariant and domain-specific rationales of graph data and apply the techniques of stable learning on molecular to remove the influence of substructure-level spurious correlation, respectively.}

\textcolor{black}{To evaluate the effectiveness of each component of our model, we further devise the following model invariant.}
\begin{itemize}
    \item \textcolor{black}{\textbf{SCI-r}: we remove the $\mathcal{L}_r$ to evaluate the effectiveness of sub-structure regularization.}
    \item \textcolor{black}{\textbf{SCI-h}: we remove the $\mathcal{L}_h$ to evaluate the effectiveness of sparsity regularization of $G_h$.}
    \item \textcolor{black}{\textbf{SCI-n}: we remove the $\mathcal{L}_n$ to evaluate the effectiveness of sparsity regularization of $G_n$. }
    \item \textcolor{black}{\textbf{SCI-k}: we remove the atom classifier to evaluate the effectiveness of the accurate reconstruction of the atom latent variables. }
\end{itemize}


\subsection{Experiment Results}

\begin{table}[t]
\centering
\caption{The RMSE of the compared methods on three molecular property regression tasks of the OGB dataset. The values presented are averaged over four replicates with different random seeds. Values in the parenthesis denote the standard errors.}
\label{tab:ogb-reg}
\resizebox{0.49\textwidth}{!}{%
\begin{tabular}{@{}cccc@{}}
\toprule
\textbf{Models}                             & Molesol        & Mollipo        & Molfreesolv    \\ \midrule
\multicolumn{1}{c|}{\textbf{GCN}}           & 1.1274(0.0191) & 0.7822(0.0067) & 2.1519(0.0645) \\
\multicolumn{1}{c|}{\textbf{GAT}}           & 1.0359(0.0509) & 0.8090(0.0076) & 2.0072(0.1410) \\
\multicolumn{1}{c|}{\textbf{GraphSAGE}}     & 1.1220(0.0436) & 0.7771(0.0156) & 2.1944(0.1242) \\
\multicolumn{1}{c|}{\textbf{GIN}}           & 1.1042(0.0385) & 0.7477(0.0077) & 2.4035(0.2073) \\
\multicolumn{1}{c|}{\textbf{GIN0}}          & 1.0001(0.0394) & 0.7601(0.0085) & 2.3816(0.0807) \\
\multicolumn{1}{c|}{\textbf{SGC}}           & 1.1106(0.0215) & 0.9983(0.0042) & 2.5517(0.0084) \\
\multicolumn{1}{c|}{\textbf{JKNet}}         & 0.9632(0.0354) & 0.7609(0.0035) & 2.2326(0.0963) \\
\multicolumn{1}{c|}{\textbf{DIFFPOOL}}      & 0.9786(0.0344) & 0.8215(0.0060) & 3.1285(0.4856) \\
\multicolumn{1}{c|}{\textbf{CMPNN}}         & 1.2530(0.0950)   & 0.8937(0.0655) & 3.3856(0.4777) \\
\multicolumn{1}{c|}{\textbf{DIR}}           & 2.1727(0.1705) & 2.3907(0.1176) & 2.4079(0.1009) \\
\multicolumn{1}{c|}{\textbf{StableGNN-GCN}} & 0.9638(0.0292) & 0.7839(0.0165) & 3.2160(0.0707)  \\
\multicolumn{1}{c|}{\textbf{StableGNN-Graph}} & 1.0092(0.0706)          & 0.6971(0.0297)          & 4.0577(0.5516)          \\
\multicolumn{1}{c|}{\textbf{AttentiveFP}}   & 0.7603(0.0083) & 0.5117(0.0296) & 3.7196(0.2071) \\
\multicolumn{1}{c|}
{\textbf{OOD-GNN}}   & 0.8800(0.0500) & / & \textbf{1.8100(0.1400)} \\
\multicolumn{1}{c|}{\textbf{FFiNet}}        & 0.8704(0.0089) & 0.6970(0.0158)  & 3.7392(0.1749) \\
\multicolumn{1}{c|}{\textbf{PharmHGT}}      & 1.0582(0.0547) & 0.7513(0.0203) & 4.2888(0.1448) \\ \midrule
\multicolumn{1}{c|}{\textbf{SCI}}             & \textbf{0.6399(0.0041)} & \textbf{0.5081(0.0117)} & 1.9484(2.0152) \\ \bottomrule
\end{tabular}%
}
\end{table}


\begin{table*}[]
\centering
\caption{The AUC-ROC and RMSE of the compared methods on two molecular property prediction tasks of the GOOD dataset. The values presented are averaged over four replicates with different random seeds. Values in the parenthesis denote the standard errors.}
\label{tab:good_exp}
\resizebox{0.8\textwidth}{!}{%
\begin{tabular}{cc|cc|cc}
\toprule
                      &                                & \multicolumn{2}{c|}{HIV}   & \multicolumn{2}{c}{ZINC}        \\ \midrule
\multicolumn{1}{l|}{} & Model                          & scaffold     & size        & scaffold       & size           \\ \midrule
\multicolumn{1}{c|}{} & \textbf{GCN (2016)}            & 66.00(2.57)     & 55.95(0.65) & 0.4332(0.0166) & 0.2553(0.0193) \\
\multicolumn{1}{c|}{} & \textbf{GAT (2017)}            & 67.00(1.77)     & 56.88(1.97) & 0.3774(0.0085) & 0.5126(0.0340)  \\
\multicolumn{1}{c|}{} & \textbf{GraphSAGE (2017)}      & 66.16(1.59)  & 54.25(2.89) & 0.5180(0.0197)  & 0.5656(0.0849) \\
\multicolumn{1}{c|}{} & \textbf{GIN (2018)}            & 70.61(1.08)  & 56.91(3.53) & 0.1678(0.0147) & 0.3837(0.0255) \\
\multicolumn{1}{c|}{} & \textbf{GIN0 (2018)}           & 69.49(1.96)  & 54.56(1.53) & 0.1856(0.0199) & 0.2342(0.0051) \\
\multicolumn{1}{c|}{} & \textbf{SGC (2019)}            & 60.74(2.78)  & 55.74(0.94) & 0.4855(0.0110)  & 0.2297(0.0111) \\
\multicolumn{1}{c|}{} & \textbf{JKNet (2018)}          & 63.46(1.89)  & 58.99(1.21) & 0.3702(0.0138) & 0.5531(0.0212) \\
\multicolumn{1}{c|}{} & \textbf{DIFFPOOL (2018)}       & 60.16(4.16)  & 52.58(2.70)  & 0.5273(0.0692) & 0.3631(0.0235) \\
\multicolumn{1}{c|}{\multirow{-9}{*}{\textbf{\begin{tabular}[c]{@{}c@{}}Conventional \\ GNN-based\end{tabular}}}} &
  \textbf{CMPNN (2020)} &
  66.95(0.34) &
  55.53(1.75) &
  0.2865(0.0674) &
  0.5157(0.0793) \\ \midrule
\multicolumn{1}{c|}{} & \textbf{StableGNN-GCN (2023)}  & 64.92(2.61)  & 52.69(3.75) & 0.4567(0.0105) & 0.5548(0.0533) \\
\multicolumn{1}{c|}{} & \textbf{StableGNN-SAGE (2023)} & 60.80(1.99)   & 59.38(1.26) & 0.2645(0.0108) & 0.4990(0.1019)  \\
\multicolumn{1}{c|}{} & \textbf{DIR (2022)}            & 68.44(2.51)  & 57.67(3.75) & 0.3682(0.0639) & 0.4578(0.0412) \\
\multicolumn{1}{c|}{\multirow{-5}{*}{\textbf{\begin{tabular}[c]{@{}c@{}}Graph OOD \\ Generalization\end{tabular}}}}&
  \textbf{GSAT (2022)} &
  70.07(1.76) &
  { 60.73(2.39)} &
  { 0.1418(0.0077)} &
  { 0.2101(0.0095)} \\ \midrule
\multicolumn{1}{l|}{} & \textbf{AttentiveFP (2019)}    & 68.48(0.63) & 66.97(0.58) & 0.3550(0.0062)  & 0.3726(0.0162) \\
\multicolumn{1}{l|}{} & \textbf{ParmHGT (2023)}        & 66.75(2.80)   & 58.32(4.83) & 0.1441(0.0197) & 0.2369(0.0304) \\
\multicolumn{1}{l|}{} & \textbf{FFiNet (2023)}         & 65.85(3.69)  & 61.84(2.62) & 0.1660(0.0078)  & 0.2212(0.0180)  \\
\multicolumn{1}{l|}{\multirow{-4}{*}{\begin{tabular}[c]{@{}c@{}}\textbf{Molecular} \textbf{Property}\\  \textbf{Prediction}\end{tabular}}} &
  \textbf{ToxExpert (2023)} &
  68.93(1.05) &
  58.63(2.11) &
  0.3512(0.0021) &
  0.4887(0.0036) \\ \midrule
\multicolumn{1}{l|}{} &
  \textbf{SCI} &
  \textbf{72.15(2.55)} &
  \textbf{68.21(0.71)} &
  \textbf{0.1241(0.0056)} &
  \textbf{0.1979(0.0060)} \\ \bottomrule
\end{tabular}%
}
\end{table*}

\subsubsection{Experiment Results on OGB datasets}

\begin{table*}[t]
\centering
\caption{The ROC-AUC of the compared methods on seven molecular property classification tasks of the MoleculeNet datasets. The values presented are averaged over four replicates with different random seeds. Values in the parenthesis denote the standard errors.}
\label{tab:moleculenet}
\begin{tabular}{@{}c|ccccccc@{}}
\toprule
\textbf{Models}         & BBBP           & ClinTox        & Tox21          & SIDER          & BACE           & MUV             & HIV            \\ \midrule
\textbf{GCN}            & 0.5876(0.0022) & 0.7321(0.0027) & 0.5938(0.0017) & 0.6092(0.0031) & 0.6570(0.0587) & 0.5523(0.0599)  & 0.5963(0.0122) \\
\textbf{GAT}            & 0.6197(0.0050) & 0.7261(0.0062) & 0.5824(0.0009) & 0.6085(0.0043) & 0.6371(0.0223) & 0.6106(0.0462)  & 0.5888(0.0019) \\
\textbf{GraphSage}      & 0.6237(0.0078) & 0.7354(0.0059) & 0.6032(0.0004) & 0.6087(0.0038) & 0.6441(0.0154) & 0.5981(0.0472)  & 0.5948(0.0176) \\
\textbf{GIN}            & 0.6189(0.0109) & 0.7677(0.0069) & 0.5943(0.0006) & 0.6080(0.0023) & 0.6364(0.0048) & 0.6002(0.0206)  & 0.6071(0.0008) \\
\textbf{GIN0}           & 0.6123(0.0108) & 0.7733(0.0066) & 0.5883(0.0087) & 0.6131(0.0056) & 0.6332(0.0061) & 0.6004(0.0192)  & 0.6003(0.0014) \\
\textbf{SGC}            & 0.5931(0.0046) & 0.8287(0.0244) & 0.7771(0.0033) & 0.6088(0.0031) & 0.7471(0.0058) & 0.6757(0.0034)  & 0.6346(0.0191) \\
\textbf{JKNet} &
  0.5959(0.0037) &
  0.8022(0.0070) &
  \multicolumn{1}{l}{0.7976(0.0123)} &
  0.6134(0.0028) &
  0.7489(0.0018) &  
  0.7238(0.0041) &
  0.6546(0.0067) \\
\textbf{Diffpooling}    & 0.6281(0.0196) & 0.7943(0.0471) & 0.6104(0.0036) & 0.6039(0.0155) & 0.7431(0.0454) & 0.6206(0.0341)  & 0.6934(0.0111) \\
\textbf{CMPNN}          & 0.5854(0.0010) & 0.8706(0.0148) & 0.7825(0.0035) & 0.5551(0.0060) & 0.7405(0.0347) & 0.7789(0.0071)  & 0.6158(0.0045) \\
\textbf{StableGNN-GCN}  & 0.5993(0.0235) & 0.8402(0.0475) & 0.5965(0.0027) & 0.5989(0.0115) & 0.7420(0.0148) & 0.6454(0.0023)  & 0.6958(0.0097) \\
\textbf{StableGNN-SAGE} & 0.6185(0.0141) & 0.6120(0.0113) & 0.6122(0.0115) & 0.6120(0.0113) & 0.7939(0.0012) & 0.6395(0.0051)  & 0.6866(0.0402) \\
\textbf{DIR} &
  0.6442(0.0088) &
  0.8435(0.0360) &
  \multicolumn{1}{l}{0.7723(0.0019)} &
  0.6110(0.0094) &
  0.7515(0.0593) &
  \multicolumn{1}{l}{0.7172(0.0051)} &
  0.6896(0.0127) \\
\textbf{AttentiveFP}    & 0.6451(0.0169) & 0.8803(0.0168) & 0.8448(0.0015) & 0.6321(0.0125) & 0.8186(0.0045) & 0.7897(0.0163)  & \textbf{0.7662(0.0141)} \\
\textbf{PharmHGT}       & 0.7077(0.0107) & 0.8935(0.0363) & 0.8366(0.0043) & 0.6178(0.0166) & 0.8166(0.0136) & 0.7536(0.0324) & 0.7163(0.0072) \\
\textbf{FFiNet}         & 0.6904(0.0136) & 0.8590(0.0347) & 0.8213(0.0029) & 0.6438(0.0088) & 0.7967(0.0372) & 0.7142(0.0135) & 0.7276(0.0063) \\ \midrule
\textbf{SCI} &
  \textbf{0.7120(0.0069)} &
  \textbf{0.9072(0.0327)} &
  \textbf{0.8481(0.0029)} &
  \textbf{0.6455(0.0007)} &
  \textbf{0.8237(0.0054)} &
  \multicolumn{1}{l}{\textbf{0.7956(0.0019)}} &
  {0.7546(0.0110)} \\ \bottomrule
\end{tabular}
\end{table*}

Experiment results on seven molecular property prediction classification and regression datasets are shown in Table \ref{tab:ogb_cls} and \ref{tab:ogb-reg}, respectively.  We conduct
the Wilcoxon signed-rank test on the reported accuracies,
our method significantly outperforms the baselines, with
a p-value threshold of 0.05. According to the experiment results, we can draw the following conclusions:
\begin{itemize}
    \item The proposed SCI model outperforms all other baselines on most of the datasets, which is attributed to both the proposed causal generation process and the identifiability theories of semantic-relevant components.
    \item Some GNN-based graph classification methods like SGC and GIN do not perform well on the molecular property prediction tasks, which means that these methods contain poor generalization ability, since they consider the whole molecular graph. And DIFFPOOL achieves better results, this is because it learns the hierarchical representations and might filter the property-irrelevant structural information.  
    \item The recently proposed methods for OOD graph data like StableGNN and DIR also perform well in most of the tasks, this is because these methods leverage the causal models to remove the spurious correlations.
    \item We also find that the proposed method does not achieve a distinct performance on Moltox21 and Molhiv compared with other methods. 
    These results can be attributed to two reasons. First, the moltoxcast dataset contains more than 600 types of molecular properties, it is so complex that all the methods achieve close results in this dataset. Second, the proposed method is based on the assumption of well modeling the proposed causal mechanism, but this dataset size is too small for our model to learn an accurate causal mechanism.
    \item \textcolor{black}{According to the experiment results on the regression tasks in Table \ref{tab:ogb-reg}, we can find that our method achieves the best results on most of the datasets and obtains comparable performance on the Molfreesolv dataset, reflecting that the proposed identification theories can benefit both the classification and the regression tasks.}
\end{itemize}

\begin{figure*}[htbp]
\centering
\subfigure[The AUC-ROC performance of the different SCI model invariant on the BBBP, ClinTox, and Tox21 datasets from the MoleculeNet Benchmark.]{
\begin{minipage}[t]{0.31\textwidth}
\centering
\includegraphics[width=6cm]{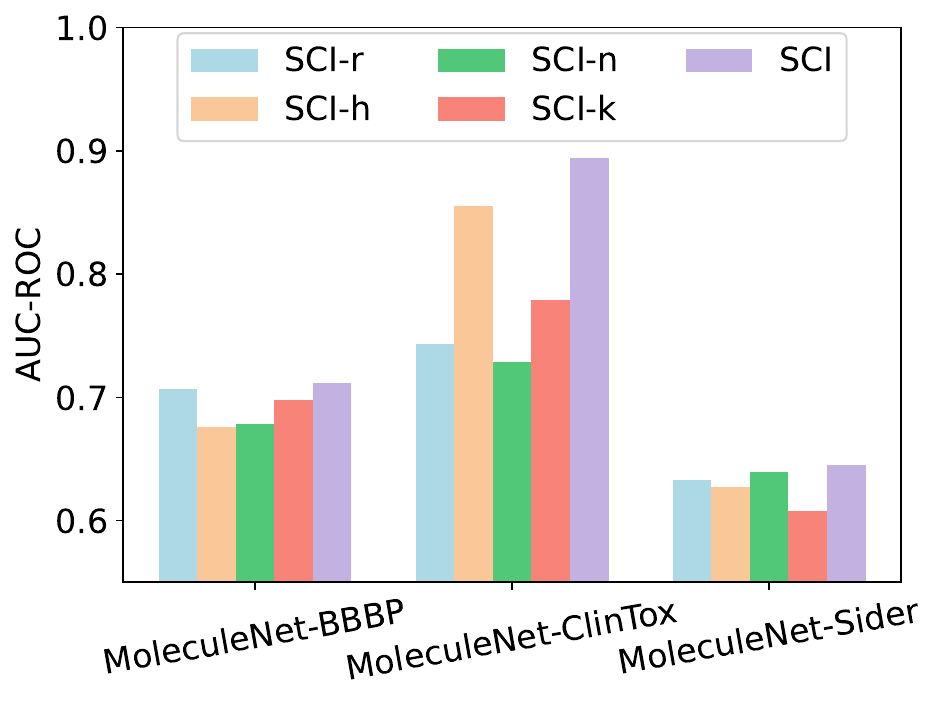}
\end{minipage}}
\hspace{1mm}
\subfigure[The AUC-ROC performance of the different SCI model invariant on the MoleculeNet-MUV, MoleculeNet-BACE, and OGBG-Tox21 datasets.]{
\begin{minipage}[t]{0.31\textwidth}
\centering
\includegraphics[width=6cm]{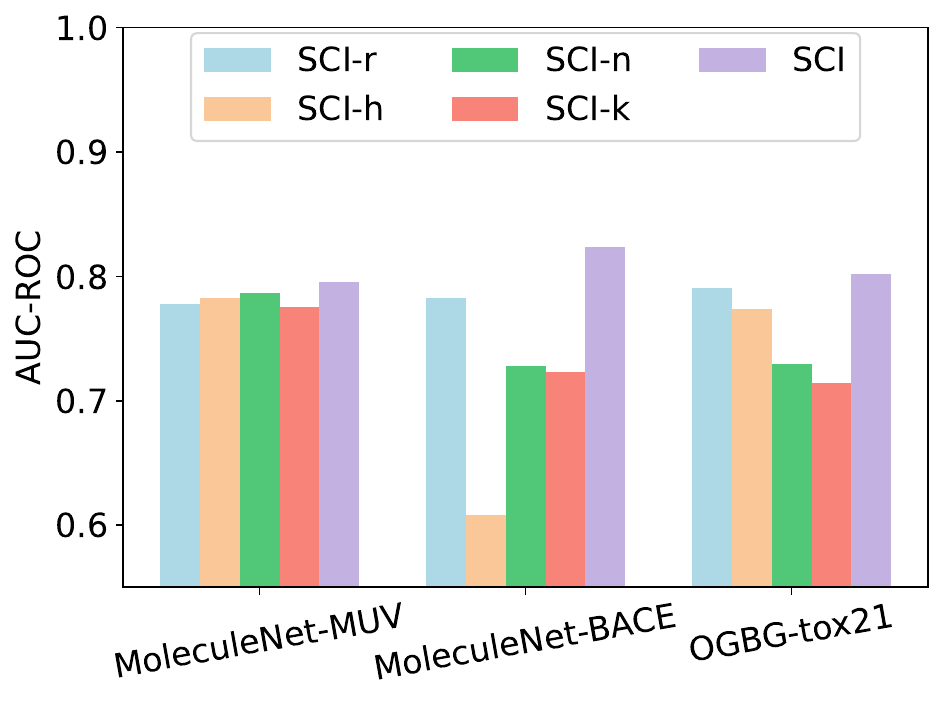}
\end{minipage}}
\hspace{1mm}
\subfigure[The AUC-ROC performance of the different SCI model invariant on the HIV, BBBP, and Toxcast datasets from the OGBG benchmark.]{
\begin{minipage}[t]{0.32\textwidth}
\centering
\includegraphics[width=6cm]{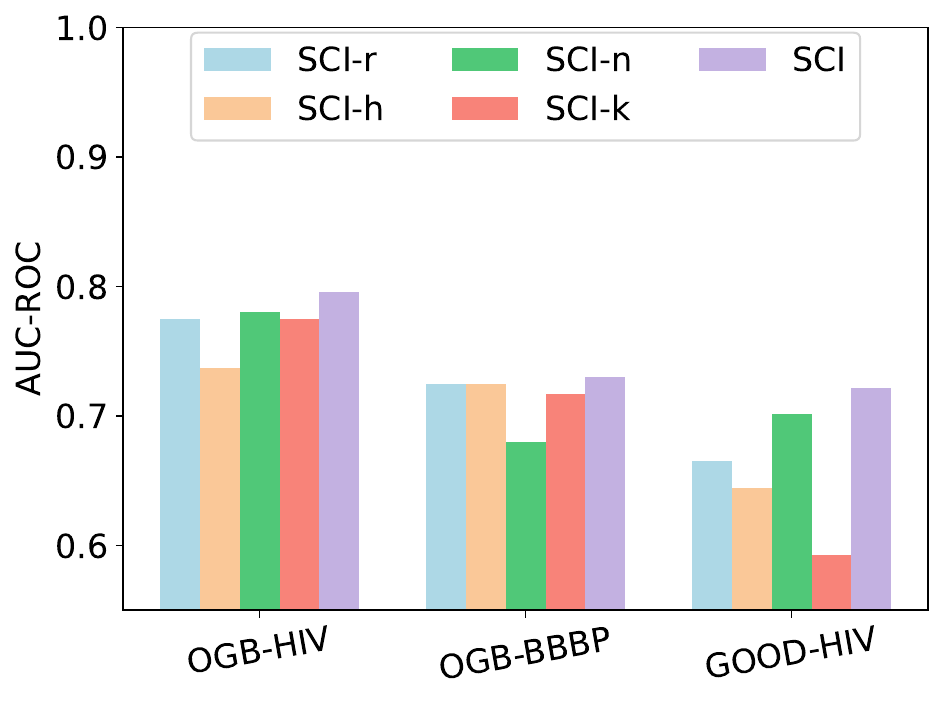}
\end{minipage}}
\caption{Ablation experiments on the different datasets from the MoleculeNet, OGBG, and GOOD benchmarks, respectively. }
\label{fig:ablation}
\end{figure*}

\subsubsection{\textcolor{black}{Experiment Results on GOOD datasets}}
\textcolor{black}{Experiment results on GOOD datasets are shown in Table \ref{tab:good_exp}. We conduct
the Wilcoxon signed-rank test on the reported accuracies,
our method significantly outperforms the baselines, with
a p-value threshold of 0.05. According to the experiment results, we can learn the following lessons.}
\begin{itemize}
    \item \textcolor{black}{Our method achieves the best results on all the split methods. According to the statistic information in Table \ref{tab:dataset_statictic}, the size of the GOOD dataset is larger, meaning that the assumptions of the proposed identification theory are easier to meet, hence the proposed SCI model can achieve better performance.}
    \item \textcolor{black}{Since the GOOD dataset is more complex, the distribution shift between the train and the test datasets might be larger, and the conventional GNN-based methods can hardly achieve ideal performance.}
    \item \textcolor{black}{In the meanwhile, compared with the conventional GCN-based methods, the methods that are devised for graph OOD problems can address the distribution shift and achieve comparable results. }
\end{itemize}

\begin{figure*}[htbp]
\centering
\subfigure[The AUC-ROC performance of the proposed SCI with different values of $\alpha$ with the range $(0.1\sim 1.5)$]{
\begin{minipage}[t]{0.31\textwidth}
\centering
\includegraphics[width=5.7cm]{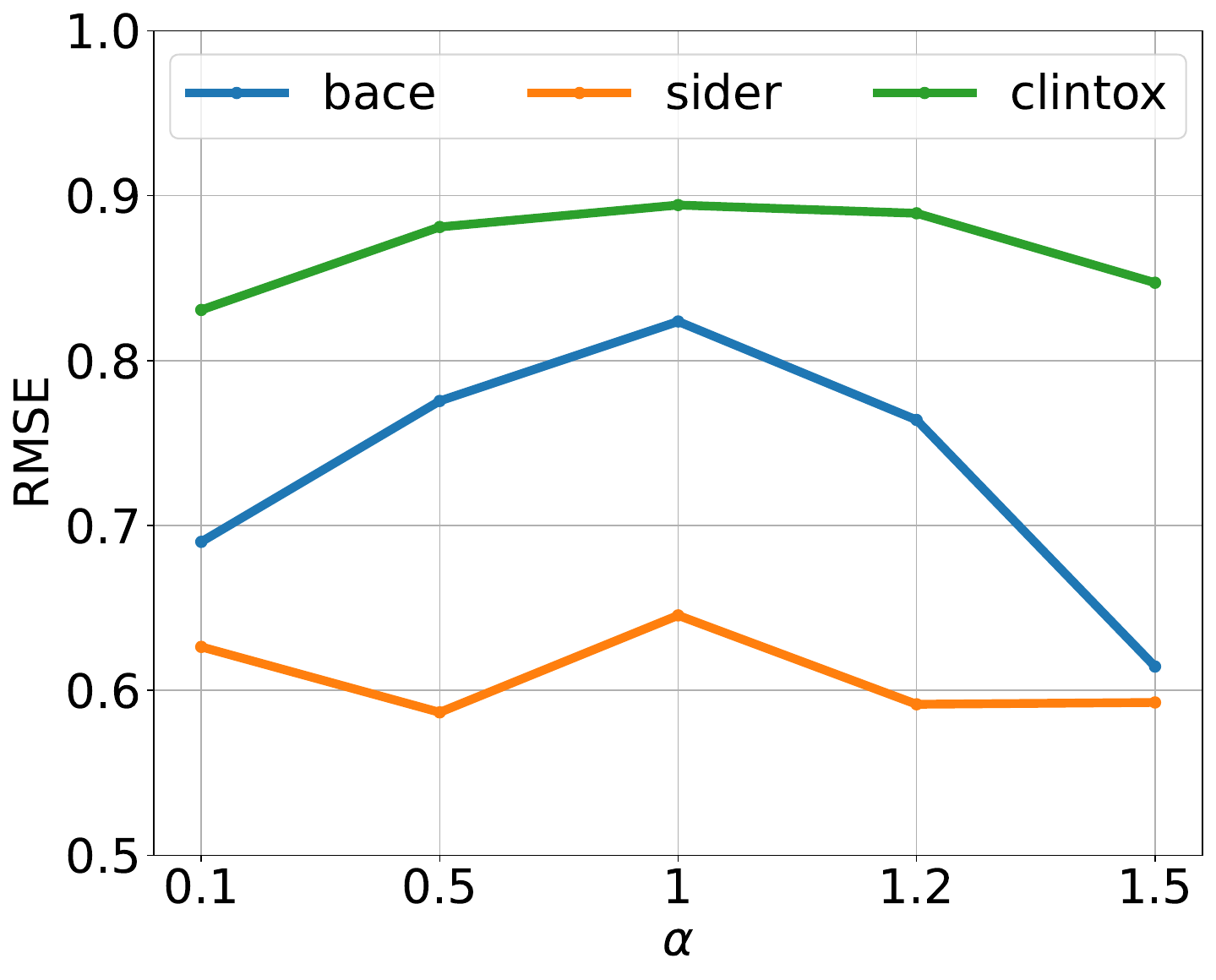}
\end{minipage}}
\hspace{1mm}
\subfigure[The AUC-ROC performance of the proposed SCI with different values of $\beta$ with the range $(0.5\sim 2.5)$]{
\begin{minipage}[t]{0.31\textwidth}
\centering
\includegraphics[width=5.7cm]{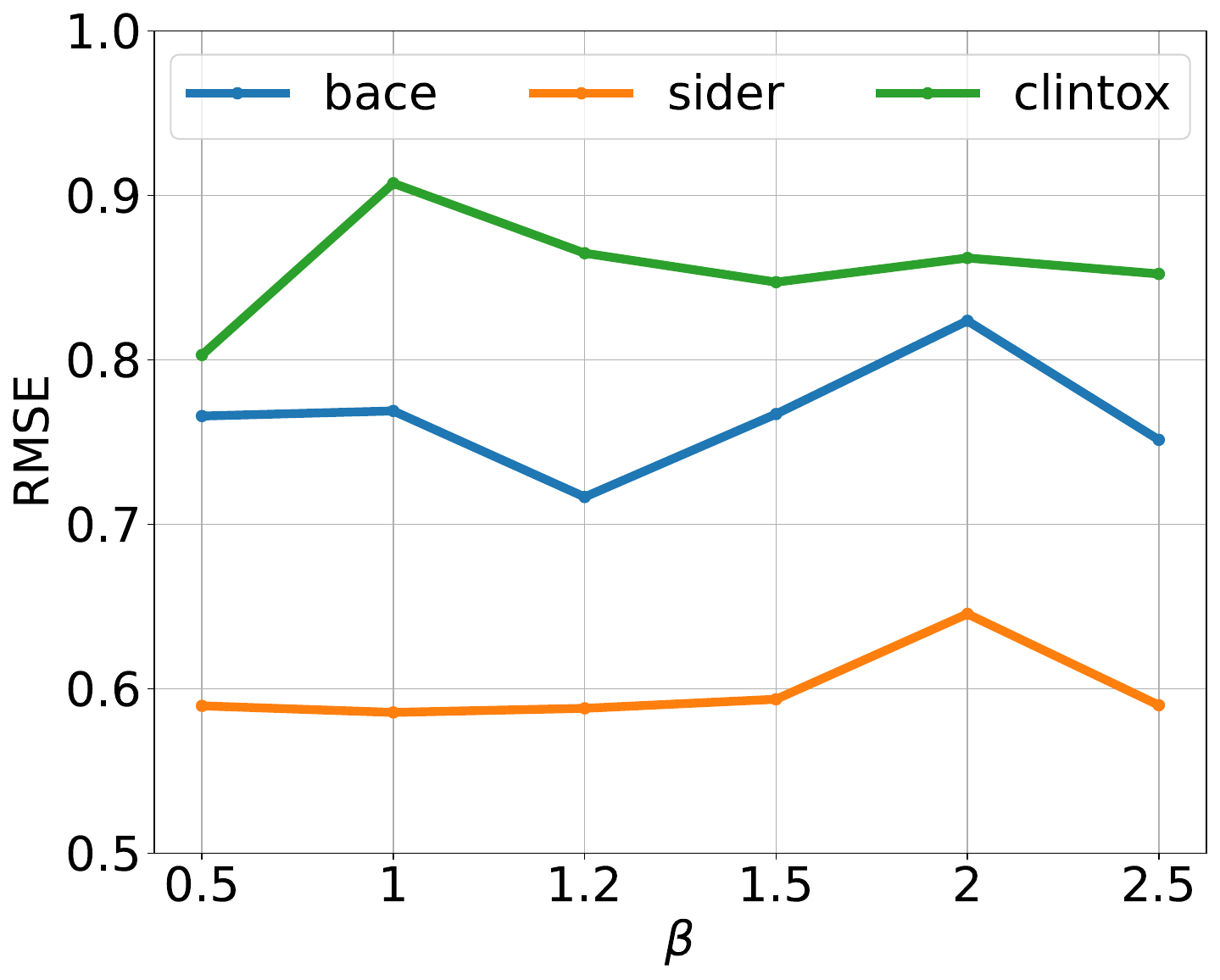}
\end{minipage}}
\hspace{1mm}
\subfigure[The AUC-ROC performance of the proposed SCI with different values of $\gamma$ with the range $(0.5\sim 2.5)$]{
\begin{minipage}[t]{0.32\textwidth}
\centering
\includegraphics[width=5.7cm]{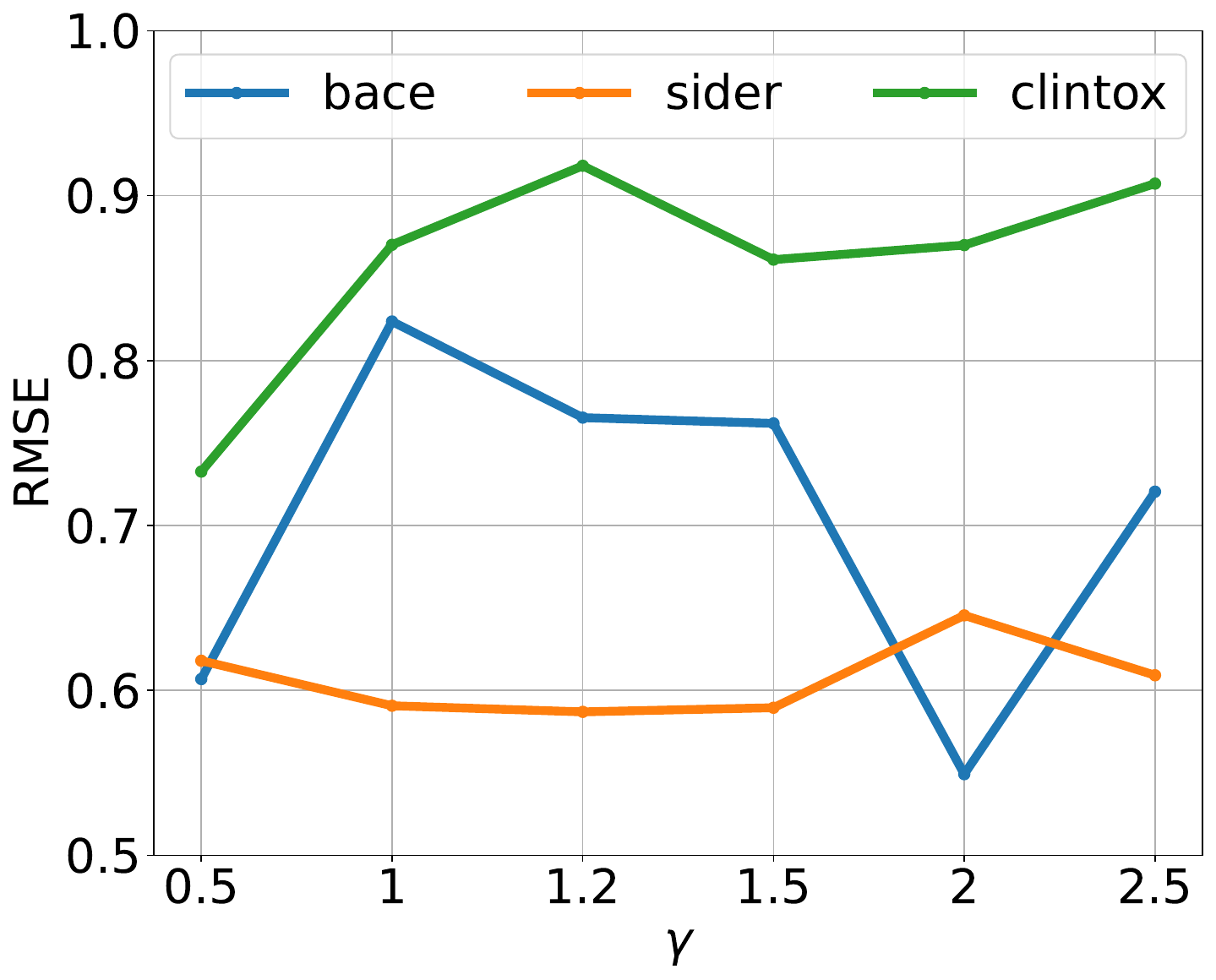}
\end{minipage}}
\caption{The experiment results of the sensitive analysis of the hyperparameters $\alpha$, $\beta$ and $\gamma$. (a) The AUC-ROC results of the SCI with different values of $\alpha$. (b) The AUC-ROC results of the SCI with different values of $\beta$. (c) The AUC-ROC results of the SCI with different values of $\gamma$.}
\label{fig:sensitity}
\end{figure*}

\begin{figure}[t]
		\centering	\includegraphics[width=0.9\columnwidth, height=7cm]{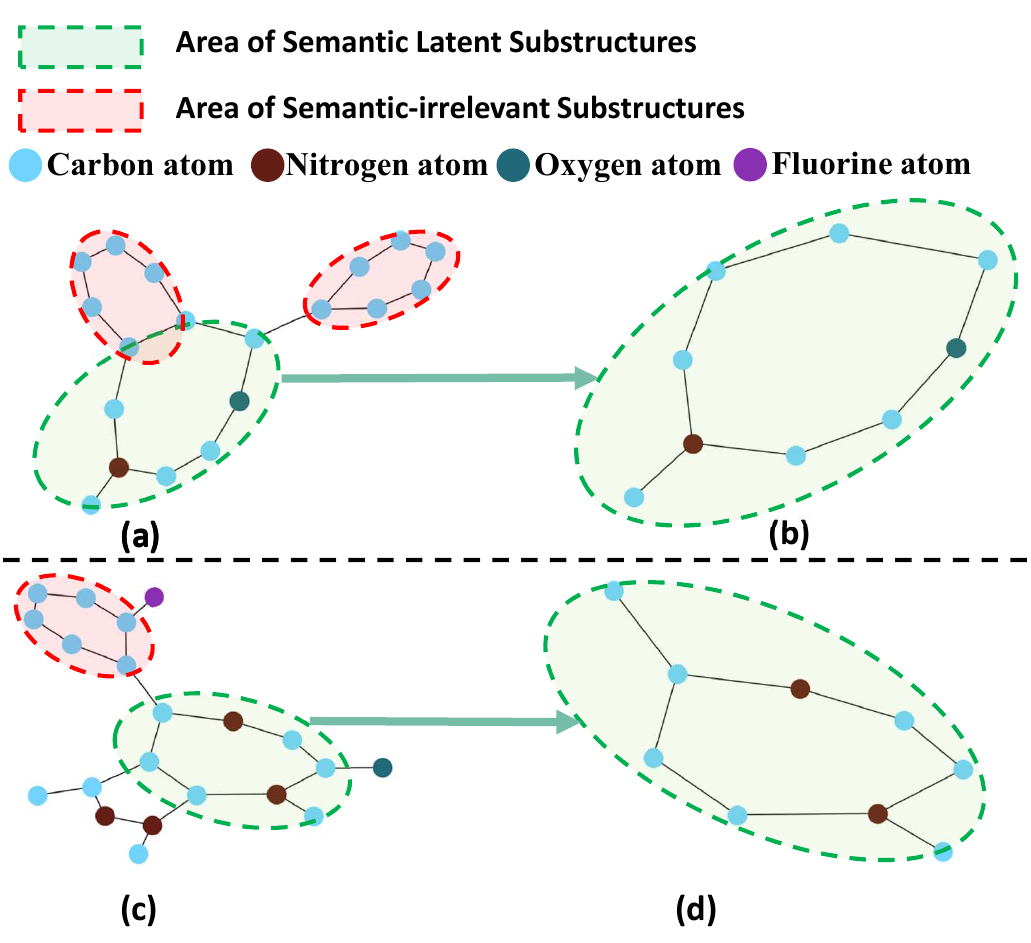}
 \caption{Case studies of the proposed MCI model (The hydrogen atoms and the details of bonds are ignored for convenience.). (a)(c) denote the original molecules and (b)(d) denote the corresponding $G_h$. The nodes with different colors denote different types of atoms.\textit{(Best view in color.)}}
 \label{fig:case}
\end{figure}

\begin{figure*}[t]
		\centering	\includegraphics[width=1.9\columnwidth]{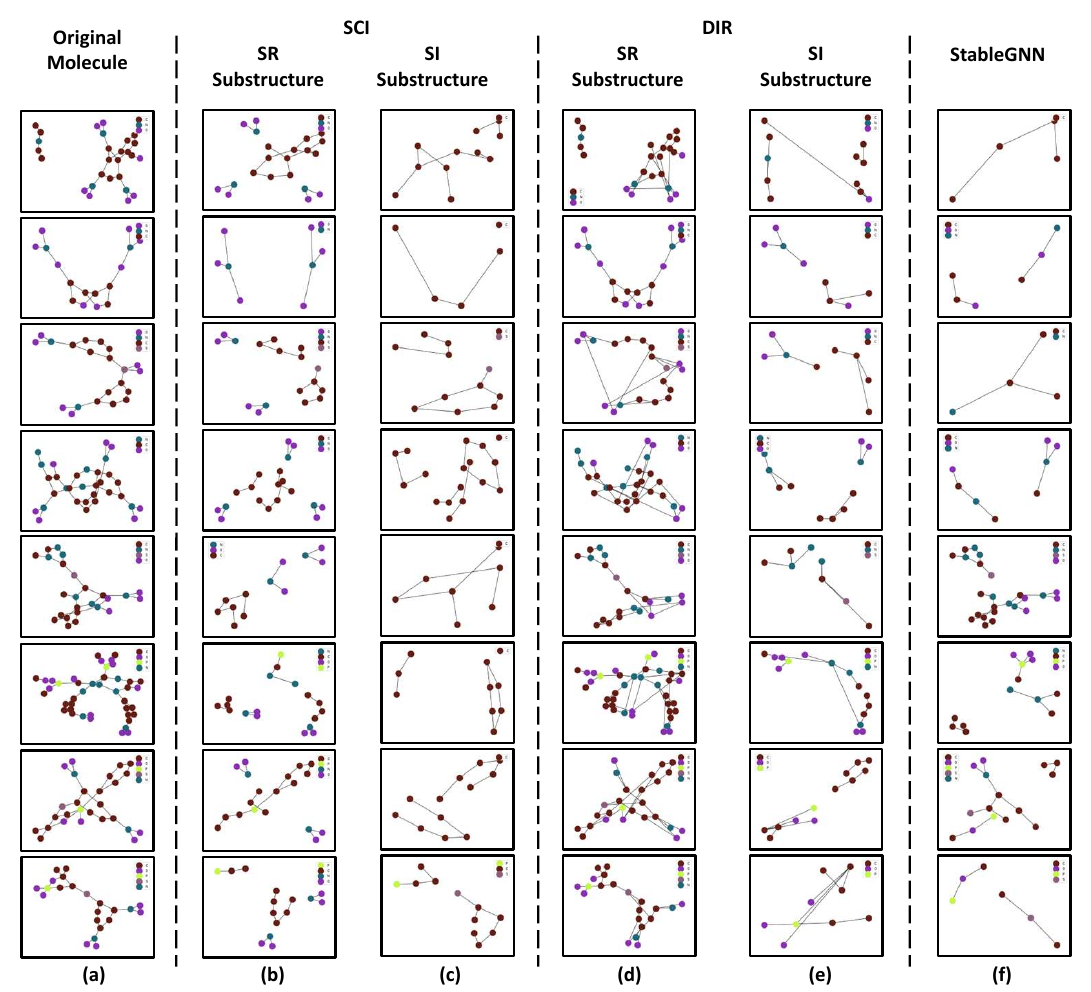}
 \caption{Visualization of examples in GOOD-HIV datasets, nodes with different colors denote different atoms, and edges denote different chemical bonds. (a) denotes the original molecule structures. (b)(c) denotes the SR and SI substructures extracted from our SCI model. (d)(e) denotes the SR and SI substructures extracted from our DIR model. (f) denotes the SR extracted from our StableGNN model. \textit{(Best view in color. )}}
\label{fig:visual}
\end{figure*}

\subsubsection{\textcolor{black}{Experiment Results on MoleculeNet datasets}}
\textcolor{black}{Experiment results on MoleculeNet datasets are shown in Table \ref{tab:moleculenet}.  We conduct
the Wilcoxon signed-rank test on the reported accuracies,
our method significantly outperforms the baselines, with
a p-value threshold of 0.05. According to the experiment results, we can learn that:}
\begin{itemize}
    \item \textcolor{black}{According to the experiment results, we can find that the proposed SCI model also achieves state-of-the-art performance in most of the tasks and achieves comparable performance in the HIV datasets.}
    \item \textcolor{black}{The recently proposed AttentiveFP also performs well, this is because they learn the invariant structural information by combining the attention mechanism and the chemical reactions, which motivates us to leverage the expert knowledge.}
\end{itemize}

\subsubsection{Ablation Study}
To evaluate the effectiveness of the semantic latent substructure regularization, sparsity regularization, and the atom variables, we devise the SCI-r, SCI-h, SCI-n, and SCI-k. Figure \ref{fig:ablation} (a)(b)(c) shows the experimental results on the different datasets. 
According to the experiment results shown in Figure \ref{fig:ablation}, we draw several observations as follows.

\begin{itemize}
    \item By comparing the standard SCI and SCI-r, the standard SCI with the semantic latent substructure regularization performs better, especially on the molbace datasets. This is because the semantic latent substructure can be well extracted with theoretical guarantees.
    \item The standard SCI outperforms the SCI-h, this is because the sparsity regularization benefits the extraction of semantic latent substructures, which coincides with the prior sparsity of semantic latent substructures.
    \item The standard SCI also outperforms the SCI-n, since restricting the sparsity of $G_{ir}$ can well generate augmented samples, the proposed SCI model can well extract the semantic latent substructures $G_r$.
    \item \textcolor{black}{We also find that the atom latent variables play an important role in the performance by comparing the experiment results of SCI and SCI-k, meaning that the identification of the atom latent variables can strengthen the prediction performance.}
\end{itemize}






\subsection{\textcolor{black}{Sensitive Analysis of Hyper-parameters}}

\textcolor{black}{To explore the importance of $\alpha$, $\beta$, and $\gamma$ in Equation (\ref{equ:total_loss}), we conduct experiments for the sensitivity of these hyperparameters.}

\textcolor{black}{First, to explore the importance of $\alpha$, we try different values of $\alpha$, and the experiment results of Bace, Sider, and Clintox datasets in MoleculeNet benchmark are shown in Figure \ref{fig:sensitity}(a). According to the experiment results, we can draw the following conclusions: 1) the proposed SCI model outperforms most of the baselines under different values of $\alpha$. 2) When the value of $\alpha$ is around 1.0, our SCI model achieves the best results, this is because an appropriate hyperparameter leads to better extraction of semantic-relevant substructures, which further benefits the model performance.} 

\textcolor{black}{We further evaluate how the different values of $\beta$ and $\gamma$ affect the model performance. To achieve this, we try different values of $\beta$ and $\gamma$ as shown in Figure \ref{fig:sensitity}(b) (c). According to the experiment results, we can draw the following conclusions. 1) We can find that the experiment results of SCI($\beta=2$) are better than that of SCI-h, which shows the effectiveness of the reasonable substructure sparseness regularization term. 2) With appropriate values of $\beta$ and $\gamma$, the model achieves the best performance, meaning that the reasonable sparsity of semantic-relevant substructures benefits the model performance. 3) With the increase of the value of $\beta$, the performance drops. This is because the too-heavy penalty of the sparseness of the semantic-relevant substructure might lead to the loss of semantic information, which further influences the model generalization.}


\subsection{Case Study}




We further provide two intuitive case studies to verify the soundness of the proposed method in Figure \ref{fig:case}. The molecules shown in Figure \ref{fig:case} (a)(c) denote the original structures in the OGBG-bbbp dataset, and the substructures shown in Figure \ref{fig:case} (b)(d) denote the corresponding $G_h$ extracted by the proposed method.
From the aforementioned results, We draw several interesting observations.

First, the proposed MCI method is able to learn the property-relevant functional groups from training data. According to our observation from the compounds from \cite{carpenter2014method}, the compounds containing the carbocycle with nitrogen atoms may have the ability to cross the blood-brain barrier (BBB), coinciding with the results shown in Figure 4(b)(d). Hence, the proposed method has the potential to provide explanations for model prediction and references for biochemistry. Second, the benzene ring (the red area in Figure 6(a)(c)) is a common functional group among organic compounds and may be property-irrelevant, which might result in spurious correlations. According to the experiment results shown in Figure 6, we can find that the proposed \textbf{SCI} model can remove these spurious correlations.

\subsection{\textcolor{black}{Visualization}}

\textcolor{black}{To further investigate our approach, we perform visualization of the semantic-relevant and semantic-irrelevant substructures of the proposed SCI, DIR, and StableGNN. Note that StableGNN can only extract the semantic-relevant substructures. Visualizations in Figure \ref{fig:visual} can not only provide an extra advantage of specific interpretation for the molecular property prediction results but also potentially enhance the practical human comprehension in the field of chemistry. According to the visualization results, we can draw the following conclusions.}

\begin{itemize}
    \item \textcolor{black}{Compared with the original molecular structures, we can find that the semantic-relevant substructures are sparse and basic structures, reflecting that our method can generate identifiable latent substructures.}
    \item \textcolor{black}{We can also find that the proposed SCI model can generate reasonable molecular substructures, which contain the basic functional groups like in chemistry. For example, our method can easily extract substructures like ``$-NO_2$'' (i.e., the substructure with two purple nodes and one blue node), which might provide potential motivation for chemical science.}
    \item \textcolor{black}{Moreover, our SCI model can remove some less important atoms, for example, the $S$ atom does not appear in both SR and SI structures. Moreover, the SR and SI substructures are not complementary, this is because there is not any restriction in the model to make these complementary. However, the SI substructures can still provide any interpretation for prediction results.}
    \item \textcolor{black}{ Compared with Figure \ref{fig:visual}(d)(e)  we can find that DIR can hardly extract the reasonable semantic-relevant substructures (some of the semantic-relevant substructures are similar to the original structures.) and might generate some inexistent or unreasonable semantic-irrelevant substructures.}
    \item \textcolor{black}{Compared with Figure \ref{fig:visual}(f), the StableGNN can also extract some meaningful semantic-relevant extract, but some of them might be too sparse, which might result in the loss of semantic information and further the suboptimal prediction performance.}
\end{itemize}




\section{Conclusion}
This paper presents a semantic components identification model that addresses the out-of-distribution shift problem in molecular property prediction. To achieve it, two identification theorems are provided to guarantee that the semantic-relevant substructures and the atom latent variables are identifiable by modeling the proposed causal mechanism for molecular data. The theoretical results further guide us in devising a practical model. 
The success of our proposed approach extends beyond improved molecular property prediction. It also yields valuable insights, explanations, and references that can be applied to the field of biochemistry. By integrating causality and chemistry, our method represents a meaningful step forward in understanding the intricate relationships between these two domains. Moreover, the study emphasizes the potential depth of utilizing domain knowledge, presenting a stimulating prospect for future research in this interdisciplinary field.


\ifCLASSOPTIONcompsoc
  \section*{Acknowledgments}
\else
  \section*{Acknowledgment}
\fi
 The authors would like to thank Zhifan Jiang, Kaitao Zhen, Haiqin Huang, and Haozhi Chen from the Guangdong University of Technology for their help in this work.

\bibliographystyle{IEEEtran}
\bibliography{main}

\ifCLASSOPTIONcaptionsoff
  \newpage
\fi

\title{Appendix for ``Identifying Semantic Component for Robust Molecular Property Prediction''}
%
%
%
%

\author{Zijian Li,
        Zunhong Xu, 
        Ruichu Cai*,Zhenhui Yang, Yuguang Yan, Zhifeng Hao ~\IEEEmembership{Senior Member,~IEEE,} Guangyi Chen and Kun Zhang
}
\onecolumn
%
%

\markboth{Journal of \LaTeX\ Class Files,~Vol.~14, No.~8, August~2015}%
{Shell \MakeLowercase{\textit{et al.}}: Bare Demo of IEEEtran.cls for Computer Society Journals}
\maketitle

\IEEEdisplaynontitleabstractindextext

%
\appendix
\textbf{A. Proof of Atom Latent Variables Identification}
\begin{theorem}
    \textbf{\textit{(Atom Latent Variables ($\bm{s}$) Identification) }}
    We follow the causal mechanism shown in Figure 2 and make the following assumptions:
    \begin{itemize}
        \item A1 (\underline{Smooth and Positive Density}): The probability density function of atom latent variables is smooth and positive, i.e. $P(\bm{s}|\bm{x}) > 0$.
        \item A2 (\underline{Conditional independence}): Conditioned on $\bm{x}$, each $s_i$ is independent of any other $s_j$ for $i,j\in [n],i\neq j$, i.e, $log P(\bm{s}|\bm{x})=\sum_i^n P(s_i|\bm{x})$.
        \item A3 (\underline{Linear independence}): For any $\bm{s} \in \mathcal{S} \subseteq \mathbb{R}^n$, where $\mathcal{S}$ is the range of $\bm{s}$ and $n$ is the dimension of $\bm{s}$, there exist $2n+1$ values of $\bm{x}$, i.e., $\bm{x}_j$ with $j=0,1,...,2n$, such that the $2n$ vectors $\textbf{\textsc{w}}(\bm{s},x_j)-\textbf{\textsc{w}}(\bm{s},x_0)$ with $j=1,...2n$, are linearly independent, where vector $\textbf{\textsc{w}}(\bm{s},\bm{x})$ is formalized as follows:
        \begin{equation}\nonumber
        \begin{split}
             \textbf{\textsc{w}}(\bm{s},x_j)=(&\frac{\partial\log P(s_0|\bm{x})}{\partial s_0},... \frac{\partial\log P(s_n|\bm{x})}{\partial s_n},...\frac{\partial^2\log P(s_0|\bm{x})}{\partial^2 s_0},...\frac{\partial^2\log P(s_n|\bm{x})}{\partial^2 s_n}).
        \end{split}
        \end{equation}
    \end{itemize}
    If a learned generative model $(\hat{f}_{\bm{s}},\hat{f}_h,\hat{f}_n,\hat{g}_A,\hat{g}_y,\hat{g}_k)$ assumes the same generation process shown in Figure 2 and matches the ground-truth conditional distribution, i.e. $P(\hat{k}|\bm{x})=P(k|\bm{x})$ and $\hat{k}$ denote the estimated variables, then the identifiability of the atom latent variables $\bm{s}$ is ensured, i.e., the ground-truth atom latent variables can be learned.
\end{theorem}
\begin{proof}
Since the learned generative model $(\hat{f_s},\hat{f_h},\hat{f_n},\hat{g_A},\hat{g_y},\hat{g_k})$ assumes the same generation process shown in Figure 2 and $P(k|x)=P(\hat{k}|x)$, we have:
\begin{equation}
\label{equ:the1_1}
\begin{split}
    P(\hat{k}|x) = P(k|x) &\Longleftrightarrow P(g_k^{-1}(\hat{k})|x)|\bm{J}_{g^{-1}_k}| = P(\bm{s}|x)|\bm{J}_{g^{-1}_k}|\\ &\Longleftrightarrow P(g_k^{-1} \circ \hat{g}_k(\hat{\bm{s}})|x)|\bm{J}_{g^{-1}_k}| = P(\bm{s}|x)|\bm{J}_{g^{-1}_k}|\\ &\Longleftrightarrow P(\phi(\hat{\bm{s}})|x) =  P(\bm{s}|x)
\end{split}
\end{equation}
in which $\hat{g}_k^{-1}$ denotes the estimated invertible function and $\phi:=g_k^{-1}\circ \hat{g}_k$ denotes the transformation between the ground truth atom latent variables and the estimated one. $|\bm{J}_{\hat{g}_k^{-1}}|$ denotes the absolute value of Jacobian matrix determinant of $\hat{g}_k^{-1}$. It is noted that $\phi:=\hat{g}_k^{-1}\circ g_k$ is invertible and $|\bm{J}_{\hat{g}_k^{-1}}|\neq 0$ since $\hat{g}_k^{-1}$ and $g_k$ are invertible.

According to the conditional independent assumption (A2), we have:
\begin{equation}
\label{equ:the1_2}
    P(\hat{\bm{s}}|x) =  \prod_{i=1}^{n}P(\hat{\bm{s}_i}|x), P(\bm{s}|x) =  \prod_{i=1}^{n}P(\bm{s}_i|x),
\end{equation}
\end{proof}
and we further have:
\begin{equation}
\label{equ:the1_3}
\begin{split}
    \log P(\hat{\bm{s}}|x) &=  \sum_{i=1}^{n}\log P(\hat{\bm{s}}_i|x)\\ \log P(\bm{s}|x) &=  \sum_{i=1}^{n} \log P(\bm{s}_i|x).
\end{split}
\end{equation}

Combining Equation (\ref{equ:the1_3})
with Equation (\ref{equ:the1_1}), we further have:
\begin{equation}
\label{equ:the1_4}
\begin{split}
P(\bm{s}|x)\cdot|\bm{J}_{\phi}|&=P(\hat{\bm{s}}|x)  \Longleftrightarrow \sum_{i=1}^n \log P(\bm{s}_i|x) + \log |\bm{J}_{\phi}|= \sum_{i=1}^n \log P(\hat{\bm{s}}_i|x),
\end{split}
\end{equation}
where $\bm{J}_{\phi}$ is the Jacobian matrix of the transformation associated with $\phi$. 

We employ the following notation to define the first and second derivatives of $\phi_s$ and $\log P(\bm{s}_i|x)$, respectively.
\begin{equation}
\label{equ:the1_5}
\centering
\begin{split}
    &\phi'_{i,(j)}:=\frac{\partial s_i}{\partial \hat{s}_j}, \qquad\qquad \qquad \quad \phi''_{i,(j,l)}:=\frac{\partial^2 s_i}{\partial \hat{s}_j\partial \hat{s}_l}\\
    &\eta'_i(s_i,x):=\frac{\partial\log P(s_i|x) }{\partial s_i }\qquad \; \eta''_i(s_i,x):=\frac{\partial^2 \log P(s_i|x)}{(\partial s_i)^2}
\end{split}
\end{equation}

Then we derive Equation (\ref{equ:the1_6}) by differentiating both sides of Equation (\ref{equ:the1_4}) twice w.r.t $\hat{s}_j$ and $\hat{s}_l$ where $j,l \in [n]$ and $j \neq l$.
\begin{equation}
\label{equ:the1_6}
\begin{split}
    \sum_{i=1}^n \left(\eta''_i(s_i,x)\cdot \phi'_{i,(j)}\cdot \phi'_{i,(l)} + \eta'_i(s_i,x)\cdot \phi''_{i,(j,l)} \right) + \frac{\partial^2 \log |\bm{J}_h|}{\partial\hat{s}_j\partial\hat{s}_l}=0.
\end{split}
\end{equation}

According to the linear independence assumption, there exist $2n+1$ values of $x=\{x_0,x_1,\cdots,x_{2n}\}$, we have $2n+1$ equations by different values of $x$ in Equation (\ref{equ:the1_6}). Subtracting each equation corresponding to $x_1, \cdots, x_{2n}$ with the equation corresponding to $x+0$, we have the following equation:
\begin{equation}
\label{equ:the1_7}
\begin{split}
    \sum_{i=1}^n\left( (\eta''_i(s_i,x_t) - \eta''_i(s_i,x_0))\cdot  \phi'_{i,(j)} \phi'_{i,(l)}\right. \left. + (\eta'_i(s_i,x_t)-\eta'_i(s_i,x_0))\cdot \phi''_{i,(j,l)}\right)=0,
\end{split}
\end{equation}
in which $t=1,\cdots,2n$. We can further consider Equation (\ref{equ:the1_7}) as homogeneous linear equations with the linearly independent restriction. As a result, the only solution is $\phi'_{i,(j)} \phi'_{i,(l)}=0$ and $ \phi''_{i,(j,l)}=0$ for $i=1,\cdot,n$ and $j,l\in[n],j \neq l$. 

Then we have the Jacobian of $h$ shown as follows:
\begin{equation}
    \bm{J}_h = 
    \begin{bmatrix}
        \frac{\partial s_1}{\partial \hat{s_1}} \cdots \frac{\partial s_1}{\partial \hat{s_n}}\\
        \vdots \quad \ddots \quad  \vdots\\
        \frac{\partial {s_n}}{\partial \hat{s_1}} \cdots \frac{\partial s_n}{\partial \hat{s_n}}.
    \end{bmatrix}
\end{equation}

Since $\phi'_{i,(j)} \phi'_{i,(l)}=0$, there is at most one element $j\in[n]$, resulting in $\phi'_{i,(j)}\neq 0$. Therefore, there is at most one non-zero element in each row of the Jacobian matrix $\bm{J}_\phi$. 
According to the inverse function theorem, since $\phi$ is invertible, the inverse matrix of $J_\phi$ exists and equals $J_{\phi^{-1}}$. Since $J_\phi$ is invertible, $J_\phi$ is full-rank, implying that there is exactly one non-zero element in each row of the Jacobian matrix $\bm{J}_\phi$ and in each column of the Jacobian matrix $\bm{J}_\phi$. 
Therefore, given any true atom latent variable $s_i$, there exist a corresponding estimated variable $\hat{s}_j$ and an invertible function $\phi: \mathbb{R}\rightarrow\mathbb{R}$, such that $\hat{s}_j=\phi(s_i)$.\newline\newline


\noindent\textbf{B. Proof of Semantic Latent Substructures}

\begin{theorem}
\textbf{\textit{(Semantic Latent Substructure distribution ($\bm{B}_r$) Identification) }}
    We follow the causal generation process shown in Figure 2 and make the following assumptions:
    \begin{itemize}
        \item  A1 (\underline{Smooth and Invertible Generation Process}): $g_A:G_{ir}, G_{r} \rightarrow A$ is smooth and invertible with a smooth inverse.
        \item A2 (\underline{Smooth, Continuous and Positive Density}): $P(G_{ir}, G_{r})$ is a smooth, continuous density with $P(G_{ir}, G_{r})>0$ almost everywhere.
        \item A3 (\underline{Smooth and Positive Conditional Probability}) The conditional probability density function $P(G_{ir}'|G_{ir})$ is smooth \textit{w.r.t} both $G_{ir}$ and $G_{ir}'$; for any $G_{ir}$, $P(\cdot|G_{ir})>0$ in some open, non-empty subset containing $G_{ir}$.
        \item A4 (\underline{Identical data generation process}) A learned generative model $(\hat{f}_{\bm{s}},\hat{f}_h,\hat{f}_n,\hat{g}_A,\hat{g}_y,\hat{g}_k)$ assumes the same generation process shown in Equation (1).
    \end{itemize}
    Let $v$ be the node number and let $\tau: A \rightarrow (0,1)^{v\times v}$ be any smooth function. Then $\bm{B}_r$ can be identified by minimizing the following restriction:
    \begin{equation}
        \label{equ:the2_1}\mathcal{L}_r=\mathbb{E}_{(A,A')\sim \hat{P}(A,A')}\left[||\tau(A)-\tau(A')||^2_2\right]-H(\tau(A)),
    \end{equation}
    where $A'$ denotes the augmented sample from the proposed causal generation process and $H(\cdot)$ denotes the differential entropy of the random variables $\tau(A)$. When $\bm{B}_r$ is identified, we can obtain $G_r$ by sampling from $P_B(G_r;\bm{B}_r)$.
\end{theorem}

\begin{proof}

According to the causal generation process shown in Figure 2, we consider $G_{ir}, G_r$ as two $v^2$-dimension latent variables, i.e. $G_{ir}, G_r \in (0,1)^{v^2}$, where $v$ is the node number of molecular structure. 
So we can assume that the truth semantic latent graphs follow the multivariate Bernoulli distribution with the parameters of $\bm{B}_r \in (0,1)^{v^2}$, i.e. $G_r \sim P_B(G_r;\bm{B}_r)$, where $v$ is the node number of $G_r$. 
Similarly, we assume the noise latent substructures $G_{ir}$ are samples from $P_B(G_{ir};\bm{B}_{ir})$, where $\bm{B}_{ir}$ are the parameters. 
The proof of the identification of $\bm{B}_r$ consists of  the following three main steps:
\begin{itemize}
    \item First, we demonstrate that $\hat{\bm{B}}_r$ extracted by a smooth function by minimizing Equation (\ref{equ:the2_1}) is related to the true $\bm{B}_r,\bm{B}_{ir}$ through a smooth mapping $\psi$, i.e., $\hat{\bm{B}}_r=\psi(\bm{B}_r,\bm{B}_{ir})$. 
    \item Second, we show that $\hat{\bm{B}}_r=\psi(\bm{B}_r,\bm{B}_{ir})$ can only depend on the true $\bm{B}_r$ and not on $\bm{B}_{ir}$, i.e., $\hat{\bm{B}}_r=\psi(\bm{B})_r$
    \item Based on the result from \cite{zimmermann2021contrastive}, we show that $\psi$ must be a bijection.
\end{itemize}


\noindent\textbf{Step 1.} 
In this step, we aim to prove that there is a transformation between $\hat{\bm{B}}_r$ and the true $\bm{B}_r, \bm{B}_{ir}$. 
According to the A1, $g_A:G_{ir},G_r \rightarrow A$ is smooth and invertible. 
So we let $g_{A,r}^{-1}:A \rightarrow G_{r}$ be the reverse of the $g_A$ with the restriction of the first $v^2$ dimension, i.e. $G_r=g_{A,r}^{-1}(A)$ and $G_r'=g_{A,r}^{-1}(A')$. 

Then we further let $g_{A,r}^{-1}=g_{B_r}\circ \kappa_r$, where $g_{B_r}: A\rightarrow \bm{B}_{r}$ denotes the process of estimating $P_B(G_r;\bm{B}_{r})$ from $A$ and $\kappa: \bm{B}\rightarrow \{0,1\}^{v^2}$ denotes the process of sampling a graph from $P_B(\cdot;\bm{B})$. We can implement $\kappa$ via the reparameterization trick from the invertible Gaussian family 
\cite{potapczynski2020invertible}, so $\kappa$ is invertible. According to A1, since $g_A$ is invertible, so $g_{A,r}^{-1}$ and $g_{B_r}$ are invertible.


Sequentially, we build a function $\bm{\mathbf{d}}:\bm{B}_r \rightarrow (0, 1)^{v^2}$, which maps $\bm{B}_r$ to uniform random variables on $(0, 1)^{v^2}$ with the help of a recursive construction known as the \textit{Darmois construction} \cite{darmois1951analyse,hyvarinen1999nonlinear}. Hence, for any random variables $\mathcal{B}_r^i$, function $d$ can be formalized as follows:
\begin{equation}
\begin{split}
    \label{equ:the2_2}d_i(\bm{B}_r):=F_i(\bm{B}_r^i|\bm{B}_r^{1:i-1})=P(\mathcal{B}_r^i\leq \bm{B}_r^i|\bm{B}_r^{1:i-1}), i=1\cdots, v^2
\end{split}
\end{equation}
in which $\bm{B}_r^i$ denotes the $i$-th element of $\bm{B}_r$ and $\bm{B}_r^{1:i-1}$ denote the elements with the indices from $1$ to $i-1$; $F$ denotes the conditional cumulative distribution function (CDF) of $\bm{B}_r^i$ given $\bm{B}_r^{1:i-1}$. Then we define a transformation as shown in Equation (\ref{equ:the2_3}).

\begin{equation}
\label{equ:the2_3}
    \tau^*= \bm{\mathbf{d}} \circ g_{B_r}^{-1}: \mathcal{A} \rightarrow (0,1)^{v^2},
\end{equation}
in which $\mathcal{A}$ denotes the space of parameters $A$.

Then we substitute $\tau^*$ into Equation (\ref{equ:the2_1}), so we have:
\begin{equation}
\label{equ:the2_4}
\begin{split}
    \mathcal{L}_r=&\mathbb{E}_{(A,A')\sim \hat{P}(A,A')}\left[||\tau(A)-\tau(A')||^2_2\right]-H(\tau(A))\\
    =&\mathbb{E}_{(A,A')\sim \hat{P}(A,A')}\left[||\bm{\mathbf{d}} \circ g_{B_r}^{-1}(A)-\bm{\mathbf{d}} \circ g_{B_r}^{-1}(A')||^2_2\right]-H(\bm{\mathbf{d}} \circ g_{B_r}^{-1}(A))\\
    =&\mathbb{E}_{(A,A')\sim \hat{P}(A,A')}\left[||\bm{\mathbf{d}}(\bm{B}_r)-\bm{\mathbf{d}}(\bm{B}_r')||^2_2\right]-H(\bm{\mathbf{d}}(\bm{B}_r)).
\end{split}
\end{equation}

Since $A'$ is generated by $A'=g_A(G_{ir}',G_r)$, $G_r=g_{A,r}^{-1}(A)=g_{A,r}^{-1}(A')=G_r'$, so $\bm{B}_r=\bm{B}_r'$ and $||\bm{\mathbf{d}}(\bm{B}_r)-\bm{\mathbf{d}}(\bm{B}_r')||^2_2$ can be minimized to zero. 
Therefore, there is a transformation $\tau\circ g_A \circ \kappa: \bm{B}_r,\bm{B}_{ir}\rightarrow(0,1)^{v^2}$ between $\hat{\bm{B}_r}$ and the true $\bm{B}_r,\bm{B}_{ir}$, which is shown as follows:
\begin{equation}
\label{equ:the2_5}
\hat{\bm{B}}_r=\tau\circ g_A\circ \kappa(\bm{B}_r,\bm{B}_{ir})=\tau\circ g_A\circ \kappa(\bm{B}_r,\bm{B}_{ir}')
\end{equation}

In the next step, we need to show how $\psi=\tau\circ g_A \circ \kappa$ can only be a function of $\bm{B}_r$ instead of depending on $\bm{B}_{ir}$.
\newline

\noindent \textbf{Step 2.}
In this step, we prove that $\tau\circ g_A \circ \kappa$ can only be a function of $\bm{B}_r$ with a contradiction. 

We first let $\psi=\tau \circ g_A$ and suppose a contradiction that $\psi(\bm{B}_r, \bm{B}_{ir})$ depends on some component of the noise latent substructures, which can be formalized as follows:
\begin{equation}
\label{equ:the2_6}
\begin{split}
    \exists i\in \{1,\cdots,v^2\},& (\bm{B}_r^*, \bm{B}_{ir}^*) \in (0,1)^{v^2}\!\times \! (0,1)^{v^2},  s.t. \frac{\partial\psi }{\partial \bm{B}_{ir,j}}(\bm{B}_r^*,\bm{B}_{ir}^*) \neq 0.
\end{split}
\end{equation}
According to Equation (\ref{equ:the2_6}), we assume that the partial derivative of $\psi$ w.r.t some variables $\bm{B}_{ir,j}$ in the latent noise substructures are non-zero at some point $(\bm{B}_r^*,\bm{B}_{ir}^*)\in (0,1)^{v^2}\!\times \! (0,1)^{v^2}$.

Since $\psi$ is composed of smooth functions, $\psi$ is smooth and has continuous partial derivatives. Therefore, $\frac{\partial \psi}{\partial \bm{B}_{ir,j}}$ must be non-zero in a neighbourhood of $(\bm{B}_r^*, \bm{B}_{ir}^*)$. In other words, $\exists \epsilon>0\quad s.t. \quad \bm{B}_{ir,j}\longmapsto\psi(\bm{B}_r*,(\bm{B}_{ir,-j}^*,\bm{B}_{ir,j}^*))$ is strictly monotonic on $(\bm{B}_{ir,j}^*-\epsilon,\bm{B}_{ir,j}^*+\epsilon)$, where $\bm{B}_{ir,-j}$ denotes the value of remaining variables except $\bm{B}_{ir,j}$.

Sequentially, we let $\mathcal{B}_r$ and $\mathcal{B}_{ir}$ be the space of $\bm{B}_r$ and $\bm{B}_{ir}$, respectively. Then we define the auxiliary function $\delta: \mathcal{B}_{r} \times \mathcal{B}_{ir} \times \mathcal{B}_{ir} \rightarrow \mathbb{R}_{\geq 0}$ as follows:
\begin{equation}
\label{equ:the2_7}
    \delta(\bm{B}_r, \bm{B}_{ir}, \bm{B}_{ir}'):=|\psi(\bm{B}_r, \bm{B}_{ir})-\psi(\bm{B}_r, \bm{B}_{ir}')|\geq 0.
\end{equation}

To obtain a contradiction to Equation (\ref{equ:the2_5}), from \textbf{Step 1} under the assumption (\ref{equ:the2_6}), it remains to show that $\delta$ from Equation (\ref{equ:the2_7}) is strictly positive with a probability greater than zero.

First, the strict monotonicity of $\quad \bm{B}_{ir,j}\longmapsto\psi(\bm{B}_r^*,(\bm{B}_{ir,-j}^*,\bm{B}_{ir,j}^*))$ implies that 
\begin{equation}
\label{equ:the2_8}
\begin{split}
    \delta(\bm{B}_r^*,(\bm{B}_{ir,-j}^*,\bm{B}_{ir,j}),({\bm{B}'}_{ir,-j}^*,{\bm{B}'}_{ir,j}))>0, \\ \forall (\bm{B}_{ir,j},{\bm{B}'}_{ir,j})\in (\bm{B}_{ir,j}^*, \bm{B}_{ir,j}^*+\epsilon) \times (\bm{B}_{ir,j}^*-\epsilon, \bm{B}_{ir,j}^*).
\end{split}
\end{equation}

Since $\delta$ is a composition of continuous function, $\delta$ is continuous. For the open set $\mathbb{R}_{>0}$ under a continuous function, the pre-images of this function are always open. Therefore, the pre-images $\mathcal{U} \in \mathcal{B}_r \times \mathcal{B}_{ir} \times \mathcal{B}_r$ of $\delta$ are also open. According to Equation (\ref{equ:the2_8}), we further have:
\begin{equation}
\label{equ:the2_9}
\begin{split}
    \{\bm{B}_r^*\} \times \left(\{\bm{B}_{ir,-j}^*\}\times(\bm{B}_{ir,j}^*, \bm{B}_{ir,j}^*+\epsilon)\right)\times  \left(\{\bm{B}_{ir,-j}^*\}\times (\bm{B}_{ir,j}^*-\epsilon, \bm{B}_{ir,j}^*)\right) \subseteq \mathcal{U},
\end{split}
\end{equation}
so $\mathcal{U}$ is not an empty set.

According to A3, for any $\bm{B}_{ir} \in \mathcal{B}_{ir}$, there is an open subset $\mathcal{O}(\bm{B}_{ir}) \subseteq \mathcal{B}_{ir}$ containing $\bm{B}_{ir}$, such that $P(\bm{B}_{ir}'|\bm{B}_{ir})>0$ for $\forall \bm{B}_{ir}' \in \mathcal{O}(\bm{B}_{ir})$. Then we define the following space:
\begin{equation}
\label{equ:the2_10}
    \mathcal{R}:=\mathcal{B}_r \times \mathcal{B}_{ir} \times \mathcal{O}(\bm{B}_{ir}),
\end{equation}
which is a topological subspace of $\mathcal{B}_r\times\mathcal{B}_{ir}\times\mathcal{B}_{ir}$. According to A2, $P(\bm{B}_r,\bm{B}_{ir})>0$. According to A3, $P(\cdot|\bm{B}_{ir})$ is fully supported on $\mathcal{O}(\bm{B}_{ir})$ for any $\bm{B}_{ir} \in \mathcal{B}_{ir}$. Therefore, the measure $\mu(\bm{B}_r,\bm{B}_{ir},\bm{B}_{ir}')$ has fully supported, strictly-positive density on $\mathcal{R}$ w.r.t. a strictly positive measure on $\mathcal{R}$.

Since $\mathcal{U}$ is open, the interaction $\mathcal{U}\cap \mathcal{R} \subseteq \mathcal{R}$ is also open in $\mathcal{R}$. Given $\bm{B}_{ir}^*$ in Equation (\ref{equ:the2_7}), there exists $\epsilon'>0$ such that $\{\bm{B}_{ir,-j}\}\times(\bm{B}_{ir,h}^*-\epsilon', \bm{B}_{ir,j}^*)\subset \mathcal{O}(\bm{B}_{ir}^*)$. For $\epsilon''=min(\epsilon,\epsilon')>0$, we have:
\begin{equation}
\label{equ:the2_11}
\begin{split}
\{\bm{B}_r^*\}\times\left(\{\bm{B}_{ir,-j}^*\}\times (\bm{B}_{ir,j}^*,\bm{B}_{ir,j}^*+\epsilon)\right) \times \left(\{\bm{B}_{ir,-j}^*\}\right. \left. \times(\bm{B}_{ir,j}^*-\epsilon'',\bm{B}_{ir,j})\right)\subset \mathcal{R}.
\end{split}
\end{equation}
Combining Equation (\ref{equ:the2_9}) and Equation (\ref{equ:the2_11}), we can find that the left-hand side of Equation (\ref{equ:the2_11}) is also a subset of $\mathcal{U}$. Therefore, the interaction $\mathcal{U}\cap\mathcal{R}$ is non-empty.

According to the aforementioned analysis, we have:
\begin{equation}
    \label{equ:the2_12}\delta(\bm{B}_{ir},\bm{B}_r,\bm{B}_{ir}')>0 \quad s.t. \quad \bm{B}_{ir},\bm{B}_r,\bm{B}_r'\in \mathcal{U}\cap\mathcal{R}.
\end{equation}
Equation (\ref{equ:the2_12}) further implies
\begin{equation}
    \label{equ:the2_13}\psi(\bm{B}_r,\bm{B}_{ir})\neq\psi(\bm{B}_r,\bm{B}_{ir}'), 
\end{equation}
which contradicts the conclusion as shown in Equation (\ref{equ:the2_5}). This concludes the proof that $\hat{\bm{B}}_r$ is related to the true $\bm{B}_r$ via a smooth transformation $\psi=\tau\circ g_A \circ \kappa$, i.e., $\hat{\bm{B}}_r=\psi(\bm{B}_r)$.
\newline

\noindent \textbf{Step 3.} Finally, we show that the mapping $\hat{\bm{B}}_r=\tau\circ g_A\circ \kappa(\bm{B}_r)$ is invertible. To this end, we make use of the following result from \cite{zimmermann2021contrastive}.

\begin{proposition}
\label{prop1}
(Proposition 5 of \cite{zimmermann2021contrastive}). Let $\mathcal{M}$ and $\mathcal{N}$ be simply connected and oriented $\mathcal{C}^1$ manifolds without boundaries and $F:\mathcal{M}\rightarrow\mathcal{N}$ be a differentiable map. Further, let the random variable $m\in \mathcal{M}$ be distributed according to $m\sim P(m)$ for a regular density function P, i.e., $0<P<\infty$. If the pushforward $P_{\#h}(m)$ of $P$ through $F$ is also a regular density, i.e, $0<P_{\#h}<\infty$, the $F$ is a bijection.
\end{proposition}

We apply this result to the simply connected and oriented $\mathcal{C}^1$ manifolds without boundaries $\mathcal{M}=\bm{B}_r$ and $\mathcal{N}=(0,1)^{v^2}$, and the smooth function $\psi: \bm{B}_r\rightarrow (0,1)^{v^2}$ which maps the random variables $\bm{B}_r$ to uniform random variables $\hat{\bm{B}}_r$. 

Since both $P_B(G_r;\bm{B}_r)$ and the uniform distribution are regular densities in the sense of Prop. \ref{prop1}, we can conclude that $\psi$ is a bijection, i.e., invertible.

Therefore, since $\hat{\bm{B}}_r$ is related to the truth parameters $\bm{B}_r$ via a smooth invertible mapping $\psi$, we can model $P_B(G_r;\bm{B}_r)$ and further $G_r$.\newline
\end{proof}

\noindent\textbf{C. Proof of Evidence Lower Bound}

\begin{equation}
    \begin{split}
        \ln P(\bm{x},k,A,y)\geq & -D_{KL}(Q(G_r|\bm{x},A)||P(G_r))-D_{KL}(Q(G_{ir}|\bm{x},A)||P(G_{ir}|G_r,\bm{s})) -D_{KL}(Q(\bm{s}|\bm{x})||P(\bm{s}|G_r))\\
        &+E_{Q(G_r|\bm{x},A)}E_{Q(G_{ir}|\bm{x},A)}\ln P(A|G_r,G_{ir})+E_{Q(G_r|\bm{x},A)}E_{Q(G_{ir}|\bm{x},A)}E_{Q(\bm{s}|\bm{x})}\ln P(\bm{x}|G_r,G_{ir},\bm{s})\\
        &+E_{Q(\bm{s}|\bm{x})}\ln P(k|\bm{s})+E_{Q(G_r|\bm{x},A)}E_{Q(\bm{s}|\bm{x})}\ln P(y|G_r,\bm{s})
    \end{split}
\end{equation}

\begin{proof}
The proof of the ELBO is composed of three steps. First, we factorize the conditional distribution according to the Bayes theorem.

\begin{equation}
    \begin{split}
        \ln P(\bm{x},k,A,y)&=\ln \frac{P(\bm{x},k,A,y,G_r,G_{ir},\bm{s})}{P(G_r,G_{ir},\bm{s}|\bm{x},k,A,y)}=\ln \frac{P(\bm{x},k,A,y,G_r,G_{ir},\bm{s})}{P(G_r|\bm{x},k,A,y)P(G_{ir},\bm{s}|\bm{x},k,A,y,G_r)}\\
        &=\ln \frac{P(\bm{x},k,A,y,G_r,G_{ir},\bm{s})}{P(G_r|\bm{x},k,A,y)P(G_{ir}|\bm{x},A,G_r)P(\bm{s}|\bm{x},k,y,G_r)}\\
    \end{split}
\end{equation}

Second, we add the expectation operator on both sides of the equation and reformalize the equation as follows:

\begin{equation}
    \begin{split}
        \ln P(\bm{x},k,A,y)
        &=D_{KL}(Q(G_r|\bm{x},A)||P(G_r|\bm{x},k,A,y)+D_{KL}(Q(G_{ir}|\bm{x},A)||P(G_{ir}|\bm{x},A,G_r))\\
        &+D_{KL}(Q(\bm{s}|\bm{x})||P(\bm{s}|\bm{x},k,y,G_r))+\ln \frac{P(\bm{x},k,A,y,G_r,G_{ir},\bm{s})}{Q(G_r|\bm{x},A)Q(G_{ir}|\bm{x},A)Q(\bm{s}|\bm{x})}\\
    \end{split}
\end{equation}

Third, we obtain the last equality with the help of $D_{KL}(\cdot|\cdot) \geq 0$

\begin{equation}
    \small
    \begin{split}
        \ln P(\bm{x},k,A,y)
        \geq& \ln \frac{P(\bm{x},k,A,y,G_r,G_{ir},\bm{s})}{Q(G_r|\bm{x},A)Q(G_{ir}|\bm{x},A)Q(\bm{s}|\bm{x})}
        = \ln \frac{P(A|G_r,G_{ir})P(\bm{x},k,y,G_r,G_{ir},\bm{s})}{Q(G_r|\bm{x},A)Q(G_{ir}|\bm{x},A)Q(\bm{s}|\bm{x})}\\
        =& \ln \frac{P(A|G_r,G_{ir})P(\bm{x}|G_r,G_{ir},\bm{s})P(k,y,G_r,G_{ir},\bm{s})}{Q(G_r|\bm{x},A)Q(G_{ir}|\bm{x},A)Q(\bm{s}|\bm{x})}
        = \ln \frac{P(A|G_r,G_{ir})P(\bm{x}|G_r,G_{ir},\bm{s})P(k|\bm{s})P(y,G_r,G_{ir},\bm{s})}{Q(G_r|\bm{x},A)Q(G_{ir}|\bm{x},A)Q(\bm{s}|\bm{x})}\\
        =& \ln \frac{P(A|G_r,G_{ir})P(\bm{x}|G_r,G_{ir},\bm{s})P(k|\bm{s})P(y|G_r,\bm{s})}{Q(G_r|\bm{x},A)Q(G_{ir}|\bm{x},A)}+\ln \frac{P(G_r,G_{ir},\bm{s})}{Q(\bm{s}|\bm{x})}\\
        =& \ln \frac{P(A|G_r,G_{ir})P(\bm{x}|G_r,G_{ir},\bm{s})P(k|\bm{s})P(y|G_r,\bm{s})}{Q(G_r|\bm{x},A)Q(G_{ir}|\bm{x},A)}+\ln\frac{P(G_{ir}|G_r,\bm{s})P(G_r,\bm{s})}{Q(\bm{s}|\bm{x})}\\
        =& \ln \frac{P(A|G_r,G_{ir})P(\bm{x}|G_r,G_{ir},\bm{s})P(k|\bm{s})P(y|G_r,\bm{s})}{Q(G_r|\bm{x},A)Q(G_{ir}|\bm{x},A)}+\ln \frac{P(G_{ir}|G_r,\bm{s})P(\bm{s}|G_r)P(G_r)}{Q(\bm{s}|\bm{x})}\\
        =&-D_{KL}(Q(G_r|\bm{x},A)||P(G_r))-D_{KL}(Q(G_{ir}|\bm{x},A)||P(G_{ir}|G_r,\bm{s}))-D_{KL}(Q(\bm{s}|\bm{x})||P(\bm{s}|G_r))\\&+E_{Q(G_r|\bm{x},A)}E_{Q(G_r|\bm{x},A)}\ln P(A|G_r,G_{ir})+E_{Q(G_r|\bm{x},A)}E_{Q(G_{ir}|\bm{x},A)}E_{Q(\bm{s}|\bm{x})}\ln P(\bm{x}|G_r,G_{ir},\bm{s})\\
        &+E_{Q(\bm{s}|\bm{x})}\ln P(k|\bm{s})+E_{Q(G_r|\bm{x},A)}E_{Q(\bm{s}|\bm{x})}\ln P(y|G_r,\bm{s})
    \end{split}
\end{equation}

\end{proof}


\end{document}


\maketitle

\section{Proof of Atom Latent Variables Identification }
\begin{theorem}
    \textbf{\textit{(Atom Latent Variables ($\bm{s}$) Identification) }}
    We follow the causal mechanism shown in Figure 2 and make the following assumptions:
    \begin{itemize}
        \item A1 (\underline{Smooth and Positive Density}): The probability density function of atom latent variables is smooth and positive, i.e. $P(\bm{s}|\bm{x}) > 0$.
        \item A2 (\underline{Conditional independence}): Conditioned on $\bm{x}$, each $s_i$ is independent of any other $s_j$ for $i,j\in [n],i\neq j$, i.e, $log P(\bm{s}|\bm{x})=\sum_i^n P(s_i|\bm{x})$.
        \item A3 (\underline{Linear independence}): For any $\bm{s} \in \mathcal{S} \subseteq \mathbb{R}^n$, where $\mathcal{S}$ is the range of $\bm{s}$ and $n$ is the dimension of $\bm{s}$, there exist $2n+1$ values of $\bm{x}$, i.e., $\bm{x}_j$ with $j=0,1,...,2n$, such that the $2n$ vectors $\textbf{\textsc{w}}(\bm{s},x_j)-\textbf{\textsc{w}}(\bm{s},x_0)$ with $j=1,...2n$, are linearly independent, where vector $\textbf{\textsc{w}}(\bm{s},\bm{x})$ is formalized as follows:
        \begin{equation}\nonumber
        \begin{split}
             \textbf{\textsc{w}}(\bm{s},x_j)=(&\frac{\partial\log P(s_0|\bm{x})}{\partial s_0},... \frac{\partial\log P(s_n|\bm{x})}{\partial s_n},...\\&\frac{\partial^2\log P(s_n|\bm{x})}{\partial^2 s_n},...\frac{\partial^2\log P(s_n|\bm{x})}{\partial^2 s_n}).
        \end{split}
        \end{equation}
    \end{itemize}
    If a learned generative model $(\hat{f_{\bm{s}}},\hat{f_h},\hat{f_n},\hat{g_A},\hat{g_y},\hat{g_k})$ assumes the same generation process shown in Figure 2 and matches the truth conditional distribution, i.e. $P(\hat{k}|\bm{x})=P(k|\bm{x})$ and $\hat{k}$ denote the estimated variables, then the identifiability of the atom latent variables $\bm{s}$ is ensured, i.e., the ground-truth atom latent variables can be learned.
\end{theorem}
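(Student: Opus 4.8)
The plan is to follow the by-now standard route for identifiability of conditionally factorized latent-variable models (the nonlinear-ICA / iVAE style of argument), specialized to the causal graph of Figure 2. First I would convert the distributional hypothesis into a statement purely about the latents. Since the estimated model $(\hat f_{\bm s},\hat f_h,\hat f_n,\hat g_A,\hat g_y,\hat g_k)$ is assumed to follow the same generative process and to match the true conditional, the equality $P(\hat k\mid\bm x)=P(k\mid\bm x)$ propagates backward through the (injective) decoding maps and yields a smooth, invertible reparametrization $\bm h$ with $\hat{\bm s}=\bm h(\bm s)$ on the support of $P(\bm s\mid\bm x)$. Assumption A1 (smooth, strictly positive density) is what makes this legitimate: it licenses taking logarithms and differentiating the conditional densities as often as needed, and it guarantees the support $\mathcal S$ does not change under the reparametrization.

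Next I would write the change-of-variables identity $\log P(\hat{\bm s}\mid\bm x)=\log P(\bm s\mid\bm x)+\log\lvert\det J_{\bm h^{-1}}(\hat{\bm s})\rvert$ and expand both conditional log-densities with assumption A2, so each side splits into a sum of univariate terms, $\sum_i\log P(s_i\mid\bm x)$ on one side and $\sum_i\log P(\hat s_i\mid\bm x)$ (a function of $\bm h(\bm s)$) on the other. The crucial manipulation is to differentiate this identity with respect to $s_i$ and then $s_j$ for $i\neq j$: the univariate terms on the true side vanish under a mixed partial, while the $\hat{\bm s}$ side produces a combination of the first- and second-order scores $\partial\log P(\hat s_k\mid\bm x)/\partial\hat s_k$ and $\partial^2\log P(\hat s_k\mid\bm x)/\partial\hat s_k^2$ weighted by products of partial derivatives of $\bm h$. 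Evaluating this relation at the $2n+1$ values $\bm x_0,\dots,\bm x_{2n}$ of assumption A3 and subtracting the $\bm x_0$ equation eliminates every term that does not depend on $\bm x$ — in particular the $\log\lvert\det J_{\bm h^{-1}}\rvert$ term — leaving a linear combination of the difference vectors $\textsc{w}(\bm s,\bm x_j)-\textsc{w}(\bm s,\bm x_0)$, $j=1,\dots,2n$, equal to zero.

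Then I would invoke A3: those $2n$ difference vectors are linearly independent, hence every coefficient in the combination must vanish identically on $\mathcal S$. Reading off what these coefficients are forces, for each pair $i\neq j$, the relevant products of partial derivatives of $\bm h$ to be zero, i.e. each row of $J_{\bm h}$ has at most one nonzero entry. Together with invertibility of $\bm h$, this means $\bm h$ is the composition of a permutation $\sigma$ and coordinate-wise invertible transformations, $\hat s_{\sigma(i)}=h_i(s_i)$; equivalently, the ground-truth atom latent variables $\bm s$ are recovered up to permutation and element-wise reparametrization, which is the asserted identifiability.

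I expect the main obstacle to be the middle step: carefully bookkeeping which terms survive each differentiation and verifying that the surviving quantity is \emph{exactly} (a coordinate-wise rescaling of) the vector $\textsc{w}(\bm s,\bm x)$ in A3, so that the linear-independence hypothesis applies verbatim rather than to some cousin of it. A secondary point that needs care is the first step — justifying that matching at the level of $k$ genuinely yields a single well-defined invertible $\bm h$ on all of $\mathcal S$; this relies on injectivity of the decoders $\hat f_{\bm s},\hat f_h,\hat f_n$ on the latent range, which I would either state as a standing regularity assumption on the model class of Figure 2 or derive from the structural form of those maps.
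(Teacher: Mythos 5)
Your proposal follows essentially the same route as the paper's proof: establish an invertible reparametrization $\phi$ between true and estimated latents from the matched conditional, apply the change-of-variables formula and the conditional factorization (A2), take mixed second-order partial derivatives so one side vanishes, subtract the equations across the $2n+1$ values of $\bm{x}$ to kill the Jacobian term, invoke the linear independence in A3 to force the products of Jacobian entries to zero, and conclude via invertibility that $\phi$ is a permutation composed with coordinate-wise invertible maps. The one detail you flag as the "main obstacle" is resolved in the paper by differentiating with respect to the \emph{estimated} latents $\hat{s}_j,\hat{s}_l$ (rather than the true ones), which makes the estimated side vanish and leaves exactly the true-score vector $\textsc{w}(\bm{s},\bm{x})$ of A3, so the linear-independence hypothesis applies verbatim.
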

\begin{proof}
Since the learned generative model $(\hat{f_s},\hat{f_h},\hat{f_n},\hat{g_A},\hat{g_y},\hat{g_k})$ assumes the same generation process shown in Figure 2 and $P(k|x)=P(\hat{k}|x)$, we have:
\begin{equation}
\begin{split}
    P(\hat{k}|x) = P(k|x) &\Longleftrightarrow P(g_k^{-1}(\hat{k})|x)|\bm{J}_{g^{-1}_k}| = P(\bm{s}|x)|\bm{J}_{g^{-1}_k}|\\ \Longleftrightarrow &P(g_k^{-1} \circ \hat{g}_k(\hat{\bm{s}})|x)|\bm{J}_{g^{-1}_k}| = P(\bm{s}|x)|\bm{J}_{g^{-1}_k}|\\ \Longleftrightarrow &P(\phi(\hat{\bm{s}})|x) =  P  (\bm{s}|x)
\end{split}
\end{equation}
in which $\hat{g}_k^{-1}$ denotes the estimated invertible function and $\phi:=g_k^{-1}\circ \hat{g}_k$ denotes the transformation between the ground truth atom latent variables and the estimated one. $|\bm{J}_{\hat{g}_k^{-1}}|$ denotes the absolute value of Jacobian matrix determinant of $\hat{g}_k^{-1}$. It is noted that $\phi:=\hat{g}_k^{-1}\circ g_k$ is invertible and $|\bm{J}_{\hat{g}_k^{-1}}|\neq 0$ since $\hat{g}_k^{-1}$ and $g_k$ are invertible.

According to the conditional independent assumption (A2), we have:
\begin{equation}
    P(\hat{\bm{s}}|x) =  \prod_{i=1}^{n}P(\hat{\bm{s}_i}|x), P(\bm{s}|x) =  \prod_{i=1}^{n}P(\bm{s}_i|x),
\end{equation}
\end{proof}
and we further have:
\begin{equation}
\begin{split}
    \log P(\hat{\bm{s}}|x) &=  \sum_{i=1}^{n}\log P(\hat{\bm{s}_i}|x)\\ \log P(\bm{s}|x) &=  \sum_{i=1}^{n} \log P(\bm{s}_i|x).
\end{split}
\end{equation}

Combining Equation (3) with Equation (1), we further have:
\begin{equation}
\begin{split}
P(\bm{s}|x)\cdot|\bm{J}_{\phi}|&=P(\hat{\bm{s}}|x)  \Longleftrightarrow\\ \sum_{i=1}^n \log P(\bm{s}_i|x) + \log |\bm{J}_{\phi}|&= \sum_{i=1}^n \log P(\hat{\bm{s}_i}|x),
\end{split}
\end{equation}
where $\bm{J}_{\phi}$ is the Jacobian matrix of the transformation associated with $\phi$. 

We employ the following notation to define the first and second derivatives of $\phi_s$ and $\log P(\bm{s}_i|x)$, respectively.
\begin{equation}
\centering
\begin{split}
    &\phi'_{i,(j)}:=\frac{\partial s_i}{\partial \hat{s_j}}, \qquad\qquad \qquad \quad \phi''_{i,(j,l)}:=\frac{\partial^2 s_i}{\partial \hat{s_j}\partial \hat{s_l}}\\
    &\eta'_i(s_i,x):=\frac{\partial\log P(s_i|x) }{\partial s_i }\qquad \; \eta''_i(s_i,x):=\frac{\partial^2 \log P(s_i|x)}{(\partial s_i)^2}
\end{split}
\end{equation}

Then we derive Equation (6) by differentiating both sides of Equation (4) twice w.r.t $\hat{s_j}$ and $\hat{s_l}$ where $j,l \in [n]$ and $j \neq l$.
\begin{equation}
\begin{split}
    \sum_{i=1}^n \left(\eta''_i(s_i,x)\cdot \phi'_{i,(j)}\cdot \phi'_{i,(l)} + \eta'_i(s_i,x)\cdot \phi''_{i,(j,l)} \right) \\+ \frac{\partial^2 \log |\bm{J}_h|}{\partial\hat{s}_j\partial\hat{s}_l}=0.
\end{split}
\end{equation}

According to the linear independence assumption, there exist $2n+1$ values of $x=\{x_0,x_1,\cdots,x_{2n}\}$, we have $2n+1$ equations by different values of $x$ in Equation (6). Subtracting each equation corresponding to $x_1, \cdots, x_{2n}$ with the equation corresponding to $x+0$, we have the following equation:
\begin{equation}
\begin{split}
    \sum_{i=1}^n\left( (\eta''_i(s_i,x_t) - \eta''_i(s_i,x_0))\cdot  \phi'_{i,(j)} \phi'_{i,(l)}\right.\\ \left. + (\eta'_i(s_i,x_t)-\eta'_i(s_i,x_0))\cdot \phi''_{i,(j,l)}\right)=0,
\end{split}
\end{equation}
in which $t=1,\cdots,2n$. We can further consider Equation (7) as homogeneous linear equations with the linearly independent restriction. As a result, the only solution is $\phi'_{i,(j)} \phi'_{i,(l)}=0$ and $ \phi''_{i,(j,l)}=0$ for $i=1,\cdot,n$ and $j,l\in[n],j \neq l$. 

Then we have the Jacobian of $h$ shown as follows:
\begin{equation}
    \bm{J}_h = 
    \begin{bmatrix}
        \frac{\partial s_1}{\partial \hat{s_1}} \cdots \frac{\partial s_1}{\partial \hat{s_n}}\\
        \vdots \quad \ddots \quad  \vdots\\
        \frac{\partial {s_n}}{\partial \hat{s_1}} \cdots \frac{\partial s_n}{\partial \hat{s_n}}.
    \end{bmatrix}
\end{equation}

Since $\phi'_{i,(j)} \phi'_{i,(l)}=0$, there is at most one element $j\in[n]$, resulting in $\phi'_{i,(j)}\neq 0$. Therefore, there is at most one non-zero element in each row of the Jacobian matrix $\bm{J}_\phi$. 
According to the inverse function theorem, since $\phi$ is invertible, the inverse matrix of $J_\phi$ exists and equals $J_{\phi^{-1}}$. Since $J_\phi$ is invertible, $J_\phi$ is full-rank, implying that there is exactly one non-zero element in each row of the Jacobian matrix $\bm{J}_\phi$ and in each column of the Jacobian matrix $\bm{J}_\phi$. 
Therefore, given any true atom latent variable $s_i$, there exist a corresponding estimated variable $\hat{s_j}$ and an invertible function $\phi: \mathbb{R}\rightarrow\mathbb{R}$, such that $\hat{s_j}=\phi(s_i)$.


\section{Proof of Semantic Latent Substructures}

\begin{theorem}
\textbf{\textit{(Semantic Latent Substructure distribution ($\bm{B}_h$) Identification) }}
    We follow the causal generation process shown in Figure 3 and make the following assumptions:
    \begin{itemize}
        \item  A1 (\underline{Smooth and Invertible Generation Process}): $g_A:G_n, G_h \rightarrow A$ is smooth and invertible with a smooth inverse.
        \item A2 (\underline{Smooth, Continuous and Positive Density}): $P(G_h, G_n)$ is a smooth, continuous density with $P(G_h, G_n)>0$ almost everywhere.
        \item A3 (\underline{Smooth and Positive Conditional Probability}) The conditional probability density function $P(G_n'|G_n)$ is smooth \textit{w.r.t} both $G_n$ and $G_n'$; for any $G_n$, $P(\cdot|G_n)>0$ in some open, non-empty subset containing $G_n$.
        \item A4 (\underline{Identical data generation process}) A learned generative model $(\hat{f_{\bm{s}}},\hat{f_h},\hat{f_n},\hat{g_A},\hat{g_y},\hat{g_k})$ assumes the same generation process shown in Equation (1).
    \end{itemize}
    Let $v$ be the node number and let $\tau: A \rightarrow (0,1)^{v\times v}$ be any smooth function. Then $\bm{B}_h$ can be identified by minimizing the following restriction:
    \begin{equation}
        \mathcal{L}_r=\mathbb{E}_{(A,A')\sim \hat{P}(A,A')}\left[||\tau(A)-\tau(A')||^2_2\right]-H(\tau(A)),
    \end{equation}
    where $A'$ denotes the augmented sample from the proposed causal generation process and $H(\cdot)$ denotes the differential entropy of the random variables $\tau_x(A)$. When $\bm{B}_h$ is identified, we can obtain $G_h$ by sampling from $P(G_h;\bm{B}_h)$.
\end{theorem}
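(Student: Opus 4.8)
\emph{Proof approach.} The plan is to treat the two terms of $\mathcal{L}_r$ separately: the squared‑distance term is an invariance penalty that should force $\tau$ to discard all information about the nuisance structure $G_n$, while $-H(\tau(A))$ is an anti‑collapse reward that should prevent $\tau$ from discarding information about the semantic substructure $G_h$. Combining the two, the minimizer $\tau^\star$ should be a smooth bijection from $G_h$ (equivalently, from the substructure parameters $\bm{B}_h$) onto its image, which yields identifiability up to a smooth reparametrization. This is the graph‑domain analogue of augmentation‑based content/style separation results, and I would follow that template.

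First I would exploit A1: since $g_A$ is a diffeomorphism, write $A=g_A(G_n,G_h)$ and $A'=g_A(G_n',G_h)$ with $G_n'\sim P(\cdot\mid G_n)$, and set $\bar\tau:=\tau\circ g_A$. The first term of $\mathcal{L}_r$ is $\ge 0$ and is attainable at value $0$ within the smooth model class (A4), so any minimizer attains it; hence $\bar\tau(G_n,G_h)=\bar\tau(G_n',G_h)$ for $P(G_n,G_h)\,P(G_n'\mid G_n)$‑a.e.\ triple. By A3, $P(\cdot\mid G_n)$ is positive on an open neighborhood of each $G_n$, so continuity of $\bar\tau$ forces $\bar\tau(\cdot,G_h)$ to be locally constant there; by A2 the support of $P(G_n,G_h)$ is covered by overlapping such neighborhoods, so $\bar\tau(\cdot,G_h)$ is globally constant in its first argument, i.e. $\tau(A)=\psi(G_h)$ for a smooth $\psi$.

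Restricted to such $\tau$, minimizing $\mathcal{L}_r$ reduces to maximizing $H(\psi(G_h))$ over smooth maps $\psi$ valued in $(0,1)^{v\times v}$. Here I would argue that a maximizer cannot collapse $G_h$: if $\psi$ has fibers of positive measure on $\mathrm{supp}\,P(G_h)$, it factors as $\psi=\rho\circ\tilde\psi$ through a smooth injective $\tilde\psi$ that separates those fibers inside the cube, and a change‑of‑variables / data‑processing comparison gives $H(\psi(G_h))\le H(\tilde\psi(G_h))$, strictly unless $\psi$ was already a.e.\ injective. A smooth, a.e.-injective map on the open support (A2) is a diffeomorphism onto its image, so $\tau^\star(A)$ and $G_h$ are in smooth bijection. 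Finally, since $\tau^\star(A)=\psi(G_h)$ for a fixed diffeomorphism $\psi$, the law of $\tau^\star(A)$ equals the pushforward $\psi_\# P(G_h;\bm{B}_h)$, which is in bijection with $P(G_h;\bm{B}_h)$ and hence pins down $\bm{B}_h$ up to the parametrization's inherent (node‑permutation) symmetry; by A4 the estimated and true models then differ only by this reparametrization, so $\hat{\bm{B}}_h=\bm{B}_h$ in that sense, and $G_h$ is recovered by sampling $G_h\sim P(G_h;\bm{B}_h)$.

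The main obstacle is the entropy step. Differential entropy is not monotone under deterministic maps (rescaling changes it), so ``collapse decreases entropy'' is false without qualification; the argument must use the bounded codomain $(0,1)^{v\times v}$ to obtain an upper bound that is actually attained, must construct the un‑collapsing competitor $\tilde\psi$ so that it remains smooth and inside the cube, and must extract a strict inequality precisely when a positive‑measure fiber is collapsed. I would isolate this as a separate lemma on pushforward differential entropy under smooth surjections, and verify that the dimension of $G_h$ matches the $v\times v$ output so that the relevant entropies are finite.
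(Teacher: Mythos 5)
Your overall template (augmentation invariance kills the nuisance $G_n$; the entropy term prevents collapse of $G_h$) is the same as the paper's, and your middle step --- using A3 to get local constancy of $\tau\circ g_A$ in $G_n$ and A2 to globalize it over the connected support --- is essentially the paper's Step~2, just run directly rather than by contradiction. But there are two genuine gaps. First, your claim that ``the squared-distance term is attainable at value $0$, so any minimizer attains it'' does not follow: a minimizer of a sum need not minimize each summand, so a $\tau$ with a slightly positive alignment term but larger entropy could in principle beat every alignment-zero candidate. The paper closes this by explicitly constructing a witness $\tau^*=\mathbf{d}\circ g_{B_h}^{-1}$, where $\mathbf{d}$ is the Darmois construction mapping $\bm{B}_h$ to a uniform variable on $(0,1)^{v^2}$: this $\tau^*$ simultaneously zeroes the alignment term (since $A$ and $A'$ share the same $G_h$, hence the same $\bm{B}_h$) and attains the entropy maximum $H=0$ for densities supported on the unit cube. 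Hence the global minimum of $\mathcal{L}_r$ is $0$, and any minimizer must both zero the first term and output an exactly uniform distribution. You gesture at the bounded codomain in your last paragraph, but you do not notice that it is also what repairs this decomposition issue.

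Second, the step you flag as ``the main obstacle'' --- ruling out collapse via an entropy-monotonicity lemma for pushforwards under smooth surjections --- is not how the paper proceeds, and you are right that it is delicate. Once the minimizer is known to make $\tau(A)=\psi(\bm{B}_h)$ exactly uniform, no entropy comparison between competitors is needed: the paper invokes Proposition~5 of Zimmermann et al.\ (2021), which states that a differentiable map between simply connected, oriented $\mathcal{C}^1$ manifolds without boundary that pushes a regular density forward to a regular density must be a bijection. Applying this with the regular source density on $\bm{B}_h$ and the uniform target density gives invertibility of $\psi$ directly. Your proposed lemma would additionally have to cope with the non-monotonicity of differential entropy under reparametrization and with constructing a smooth un-collapsing competitor inside the cube; the topological route avoids all of that. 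Finally, note the strength of the conclusion: the paper establishes identifiability of $\bm{B}_h$ only up to a smooth invertible reparametrization $\psi$, not up to node permutation, so your closing claim that $\hat{\bm{B}}_h=\bm{B}_h$ up to permutation overstates what is actually proved.
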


\begin{proof}

According to the causal generation process shown in Figure 2, we consider $G_n, G_h$ as two $v^2$-dimension latent variables, i.e. $G_n, G_h \in (0,1)^{v^2}$, where $v$ is the node number of molecular structure. 
So we can assume that the truth semantic latent graphs follow the multivariate Bernoulli distribution with the parameters of $\bm{B}_h \in (0,1)^{v^2}$, i.e. $G_h \sim P(G_h;\bm{B}_h)$, where $v$ is the node number of $G_h$. 
Similarly, we assume the noise latent substructures $G_n$ are samples from $P(G_n;\bm{B}_n)$, where $\bm{B}_n$ are the parameters. 
The proof of the identification of $\bm{B}_h$ consists of  the following three main steps:
\begin{itemize}
    \item First, we demonstrate that $\hat{\bm{B}_h}$ extracted by a smooth function by minimizing Equation (9) is related to the true $\bm{B}_h,\bm{B}_n$ through a smooth mapping $\psi$, i.e., $\hat{\bm{B}_h}=\psi(\bm{B}_h,\bm{B}_n)$. 
    \item Second, we show that $\hat{\bm{B}_h}=\psi(\bm{B}_h,\bm{B}_n)$ can only depend on the true $\bm{B}_h$ and not on $\bm{B}_n$, i.e., $\hat{\bm{B}_h}=\psi(\bm{B}_h)$
    \item Based on the result from \cite{zimmermann2021contrastive}, we show that $\psi$ must be a bijection.
\end{itemize}


\noindent\textbf{Step 1.} 
In this step, we aim to prove that there is a transformation between $\hat{\bm{B}_h}$ and the true $\bm{B}_h, \bm{B}_n$. 
According to the A1, $g_A:G_n,G_h \rightarrow A$ is smooth and invertible. 
So we let $g_{A,h}^{-1}:A \rightarrow G_h$ be the reverse of the $g_A$ with the restriction of the first $v^2$ dimension, i.e. $G_h=g_{A,h}^{-1}(A)$ and $G_h'=g_{A,h}^{-1}(A')$. 

Then we further let $g_{A,h}^{-1}=g_{B_h}\circ \kappa_h$, where $g_{B_h}: A\rightarrow \bm{B}_{h}$ denotes the process of estimating $P(G_h;\bm{B}_{h})$ from $A$ and $\kappa: \bm{B}\rightarrow \{0,1\}^{v^2}$ denotes the process of sampling a graph from $P(\cdot;\bm{B})$. We can implement $\kappa$ via the reparameterization trick from the invertible Gaussian family 
\cite{potapczynski2020invertible}, so $\kappa$ is invertible. According to A1, since $g_A$ is invertible, so $g_{A,h}^{-1}$ and $g_{B_h}$ are invertible.


Sequentially, we build a function $\bm{\mathbf{d}}:\bm{B}_h \rightarrow (0, 1)^{v^2}$, which maps $\bm{B}_h$ to uniform random variables on $(0, 1)^{v^2}$ with the help of a recursive construction known as the \textit{Darmois construction} \cite{darmois1951analyse,hyvarinen1999nonlinear}. Hence, for any random variables $\mathcal{B}_h^i$, function $d$ can be formalized as follows:
\begin{equation}
\begin{split}
    d_i(\bm{B}_h):=F_i(\bm{B}_h^i|\bm{B}_h^{1:i-1})=P(\mathcal{B}_h^i\leq \bm{B}_h^i|\bm{B}_h^{1:i-1}), \\i=1\cdots, v^2
\end{split}
\end{equation}
in which $\bm{B}_h^i$ denotes the $i$-th element of $\bm{B}_h$ and $\bm{B}_h^{1:i-1}$ denote the elements with the indices from $1$ to $i-1$; $F$ denotes the conditional cumulative distribution function (CDF) of $\bm{B}_h^i$ given $\bm{B}_h^{1:i-1}$. Then we define a transformation as shown in Equation (11).

\begin{equation}
    \tau^*= \bm{\mathbf{d}} \circ g_{B_h}^{-1}: \mathcal{A} \rightarrow (0,1)^{v^2},
\end{equation}
in which $\mathcal{A}$ denotes the space of parameters $A$.

Then we substitute $\tau^*$ into Equation (9), so we have:
\begin{equation}
\begin{split}
    \mathcal{L}_r=&\mathbb{E}_{(A,A')\sim \hat{P}(A,A')}\left[||\tau(A)-\tau(A')||^2_2\right]-H(\tau(A))\\
    =&\mathbb{E}_{(A,A')\sim \hat{P}(A,A')}\left[||\bm{\mathbf{d}} \circ g_{B_h}^{-1}(A)-\bm{\mathbf{d}} \circ g_{B_h}^{-1}(A')||^2_2\right]\\&-H(\bm{\mathbf{d}} \circ g_{B_h}^{-1}(A))\\
    =&\mathbb{E}_{(A,A')\sim \hat{P}(A,A')}\left[||\bm{\mathbf{d}}(\bm{B}_h)-\bm{\mathbf{d}}(\bm{B}_h')||^2_2\right]-H(\bm{\mathbf{d}}(\bm{B}_h)).
\end{split}
\end{equation}

Since $A'$ is generated by $A'=g_A(G_n',G_h)$, $G_h=g_{A,h}^{-1}(A)=g_{A,h}^{-1}(A')=G_h'$, so $\bm{B}_h=\bm{B}_h'$ and $||\bm{\mathbf{d}}(\bm{B}_h)-\bm{\mathbf{d}}(\bm{B}_h')||^2_2$ can be minimized to zero. 
Therefore, there is a transformation $\tau\circ g_A \circ \kappa: \bm{B}_h,\bm{B}_n\rightarrow(0,1)^{v^2}$ between $\hat{\bm{B}_h}$ and the true $\bm{B}_h,\bm{B}_n$, which is shown as follows:
\begin{equation}
    \hat{\bm{B}_h}=\tau\circ g_A\circ \kappa(\bm{B}_h,\bm{B}_n)=\tau\circ g_A\circ \kappa(\bm{B}_h,\bm{B}_n')
\end{equation}

In the next step, we need to show how $\psi=\tau\circ g_A \circ \kappa$ can only be a function of $\bm{B}_h$ instead of depending on $\bm{B}_n$.
\newline

\noindent \textbf{Step 2.}
In this step, we prove that $\tau\circ g_A \circ \kappa$ can only be a function of $\bm{B}_h$ with a contradiction. 

We first let $\psi=\tau \circ g_A$ and suppose a contradiction that $\psi(\bm{B}_h, \bm{B}_n)$ depends on some component of the noise latent substructures, which can be formalized as follows:
\begin{equation}
\begin{split}
    \exists i\in \{1,\cdots,v^2\},& (\bm{B}_h^*, \bm{B}_n^*) \in (0,1)^{v^2}\!\times \! (0,1)^{v^2}, \\ s.t. &\frac{\partial\psi}{\partial \bm{B}_{n,i}}(\bm{B}_h^*,\bm{B}_n^*) \neq 0.
\end{split}
\end{equation}
According to Equation (15), we assume that the partial derivative of $\psi$ w.r.t some variables $\bm{B}_{n,i}$ in the latent noise substructures are non-zero at some point $(\bm{B}_h^*,\bm{B}_n^*)\in (0,1)^{v^2}\!\times \! (0,1)^{v^2}$.

Since $\psi$ is composed of smooth functions, $\psi$ is smooth and has continuous partial derivatives. Therefore, $\frac{\partial \psi}{\partial \bm{B}_{n,i}}$ must be non-zero in a neighbourhood of $(\bm{B}_h^*, \bm{B}_n^*)$. In other words, $\exists \epsilon>0\quad s.t. \quad \bm{B}_{n,i}\longmapsto\psi(\bm{B}_h*,(\bm{B}_{n,-i}^*,\bm{B}_{n,i}^*))$ is strictly monotonic on $(\bm{B}_{n,i}^*-\epsilon,\bm{B}_{n,i}^*+\epsilon)$, where $\bm{B}_{n,-i}$ denotes the value of remaining variables except $\bm{B}_{n,i}$.

Sequentially, we let $\mathcal{B}_h$ and $\mathcal{B}_n$ be the space of $\bm{B}_h$ and $\bm{B}_n$, respectively. Then we define the auxiliary function $\delta: \mathcal{B}_h \times \mathcal{B}_n \times \mathcal{B}_n \rightarrow \mathbb{R}_{\geq 0}$ as follows:
\begin{equation}
    \delta(\bm{B}_h, \bm{B}_n, \bm{B}_n'):=|\psi(\bm{B}_h, \bm{B}_n)-\psi(\bm{B}_h, \bm{B}_n')|\geq 0.
\end{equation}

To obtain a contradiction to Equation (13), from \textbf{Step 1} under the assumption (14), it remains to show that $\delta$ from Equation (15) is strictly positive with a probability greater than zero.

First, the strict monotonicity of $\quad \bm{B}_{n,i}\longmapsto\psi(\bm{B}_h*,(\bm{B}_{n,-i}^*,\bm{B}_{n,i}^*))$ implies that 
\begin{equation}
\begin{split}
    \delta(\bm{B}_h^*,(\bm{B}_{n,-i}^*,\bm{B}_{n,i}),({\bm{B}'}_{n,-i}^*,{\bm{B}'}_{n,i}))>0, \\ \forall (\bm{B}_{n,i},{\bm{B}'}_{n,i})\in (\bm{B}_{n,i}^*, \bm{B}_{n,i}^*+\epsilon) \times (\bm{B}_{n,i}^*-\epsilon, \bm{B}_{n,i}^*).
\end{split}
\end{equation}

Since $\delta$ is a composition of continuous function, $\delta$ is continuous. For the open set $\mathbb{R}_{>0}$ under a continuous function, the pre-images of this function are always open. Therefore, the pre-images $\mathcal{U} \in \mathcal{B}_h \times \mathcal{B}_n \times \mathcal{B}_h$ of $\delta$ are also open. According to Equation (16), we further have:
\begin{equation}
\begin{split}
    \{\bm{B}_h^*\} \times \left(\{\bm{B}_{n,-i}^*\}\times(\bm{B}_{n,i}^*, \bm{B}_{n,i}^*+\epsilon)\right)\times \\ \left(\{\bm{B}_{n,-i}^*\}\times (\bm{B}_{n,i}^*-\epsilon, \bm{B}_{n,i}^*)\right) \subseteq \mathcal{U},
\end{split}
\end{equation}
so $\mathcal{U}$ is not an empty set.

According to A3, for any $\bm{B}_n \in \mathcal{B}_n$, there is an open subset $\mathcal{O}(\bm{B}_n) \subseteq \mathcal{B}_n$ containing $\bm{B}_n$, such that $P(\bm{B}_n'|\bm{B}_n)>0$ for $\forall \bm{B}_n' \in \mathcal{O}(\bm{B}_n)$. Then we define the following space:
\begin{equation}
    \mathcal{R}:=\mathcal{B}_h \times \mathcal{B}_n \times \mathcal{O}(\bm{B}_n),
\end{equation}
which is a topological subspace of $\mathcal{B}_h\times\mathcal{B}_n\times\mathcal{B}_n$. According to A2, $P(\bm{B}_h,\bm{B}_n)>0$. According to A3, $P(\cdot|\bm{B}_n)$ is fully supported on $\mathcal{O}(\bm{B}_n)$ for any $\bm{B}_n \in \mathcal{B}_n$. Therefore, the measure $\mu(\bm{B}_h,\bm{B}_n,\bm{B}_n')$ has fully supported, strictly-positive density on $\mathcal{R}$ w.r.t. a strictly positive measure on $\mathcal{R}$.

Since $\mathcal{U}$ is open, the interaction $\mathcal{U}\cap \mathcal{R} \subseteq \mathcal{R}$ is also open in $\mathcal{R}$. Given $\bm{B}_n^*$ in Equation (15), there exists $\epsilon'>0$ such that $\{\bm{B}_{n,-i}\}\times(\bm{B}_{n,i}^*-\epsilon', \bm{B}_{n,i}^*)\subset \mathcal{O}(\bm{B}_n^*)$. For $\epsilon''=min(\epsilon,\epsilon')>0$, we have:
\begin{equation}
\begin{split}
\{\bm{B}_h^*\}\times\left(\{\bm{B}_{n,-i}^*\}\times (\bm{B}_{n,i}^*,\bm{B}_{n,i}^*+\epsilon)\right) \times \left(\{\bm{B}_{n,-i}^*\}\right.\\ \left. \times(\bm{B}_{n,i}^*-\epsilon'',\bm{B}_{n,i})\right)\subset \mathcal{R}.
\end{split}
\end{equation}
Combining Equation (17) and Equation (19), we can find  that the left-hand side of Equation (19) is also a subset of $\mathcal{U}$. Therefore, the interaction $\mathcal{U}\cap\mathcal{R}$ is non-empty.

According to the aforementioned analysis, we have:
\begin{equation}
    \delta(\bm{B}_n,\bm{B}_h,\bm{B}_n')>0 \quad s.t. \quad \bm{B}_n,\bm{B}_h,\bm{B}_h'\in \mathcal{U}\cap\mathcal{R}.
\end{equation}
Equation (20) further implies
\begin{equation}
    \psi(\bm{B}_h,\bm{B}_n)\neq\psi(\bm{B}_h,\bm{B}_n'), 
\end{equation}
which contradicts the conclusion as shown in Equation (13). This concludes the proof that $\hat{\bm{B}}_h$ is related to the true $\bm{B}_h$ via a smooth transformation $\psi=\tau\circ g_A \circ \kappa$, i.e., $\hat{\bm{B}_h}=\psi(\bm{B}_h)$.
\newline

\noindent \textbf{Step 3.} Finally, we show that the mapping $\hat{\bm{B}_h}=\tau\circ g_A\circ \kappa(\bm{B}_h)$ is invertible. To this end, we make use of the following result from \cite{zimmermann2021contrastive}.

\begin{proposition}
\label{prop1}
(Proposition 5 of \cite{zimmermann2021contrastive}). Let $\mathcal{M}$ and $\mathcal{N}$ be simply connected and oriented $\mathcal{C}^1$ manifolds without boundaries and $F:\mathcal{M}\rightarrow\mathcal{N}$ be a differentiable map. Further, let the random variable $m\in \mathcal{M}$ be distributed according to $m\sim P(m)$ for a regular density function P, i.e., $0<P<\infty$. If the pushforward $P_{\#h}(m)$ of $P$ through $F$ is also a regular density, i.e, $0<P_{\#h}<\infty$, the $F$ is a bijection.
\end{proposition}

We apply this result to the simply connected and oriented $\mathcal{C}^1$ manifolds without boundaries $\mathcal{M}=\bm{B}_h$ and $\mathcal{N}=(0,1)^{v^2}$, and the smooth function $\psi: \bm{B}_h\rightarrow (0,1)^{v^2}$ which maps the random variables $\bm{B}_h$ to uniform random variables $\hat{\bm{B}_h}$.

Since both $P(G_h;\bm{B}_h)$ and the uniform distribution are regular densities in the sense of Prop. \ref{prop1}, we can conclude that $\psi$ is a bijection, i.e., invertible.

Therefore, since $\hat{\bm{B}_h}$ is related to the truth parameters $\bm{B}_h$ via a smooth invertible mapping $\psi$, we can model $P(G_h;\bm{B}_h)$ and further $G_h$.
\end{proof}

\section{Proof of Evidence Lower Bound}
\begin{equation}
    \begin{split}
        \ln &P(\bm{x},k,A,y)\\
        &\geq -D_{KL}(Q(G_h|\bm{x},A)||P(G_h))\\
        &-D_{KL}(Q(G_n|\bm{x},A)||P(G_n|G_h,\bm{s}))\\
        &-D_{KL}(Q(\bm{s}|\bm{x})||P(\bm{s}|G_h))\\
        &+E_{Q(G_h|\bm{x},A)}E_{Q(G_n|\bm{x},A)}\ln P(A|G_h,G_n)\\
        &+E_{Q(G_h|\bm{x},A)}E_{Q(G_n|\bm{x},A)}E_{Q(\bm{s}|\bm{x})}\ln P(\bm{x}|G_h,G_n,\bm{s})\\
        &+E_{Q(\bm{s}|\bm{x})}\ln P(k|\bm{s})\\
        &+E_{Q(G_h|\bm{x},A)}E_{Q(\bm{s}|\bm{x})}\ln P(y|G_h,\bm{s})
    \end{split}
\end{equation}

\begin{proof}
The proof of the ELBO is composed of three steps. First, we factorize the conditional distribution according to the Bayes theorem.

\begin{equation}
    \begin{split}
        \ln &P(\bm{x},k,A,y)\\
        &=\ln \frac{P(\bm{x},k,A,y,G_h,G_n,\bm{s})}{P(G_h,G_n,\bm{s}|\bm{x},k,A,y)}\\
        &=\ln \frac{P(\bm{x},k,A,y,G_h,G_n,\bm{s})}{P(G_h|\bm{x},k,A,y)P(G_n,\bm{s}|\bm{x},k,A,y,G_h)}\\
        &=\ln \frac{P(\bm{x},k,A,y,G_h,G_n,\bm{s})}{P(G_h|\bm{x},k,A,y)P(G_n|\bm{x},A,G_h)P(\bm{s}|\bm{x},k,y,G_h)}\\
    \end{split}
\end{equation}

Second, we add the expectation operator on both sides of the equation and reformalize the equation as follows:

\begin{equation}
    \begin{split}
        \ln P(&\bm{x},k,A,y)\\
        =&D_{KL}(Q(G_h|\bm{x},A)||P(G_h|\bm{x},k,A,y)\\
        &+D_{KL}(Q(G_n|\bm{x},A)||P(G_n|\bm{x},A,G_h))\\
        &+D_{KL}(Q(\bm{s}|\bm{x})||P(\bm{s}|\bm{x},k,y,G_h))\\
        &+\ln \frac{P(\bm{x},k,A,y,G_h,G_n,\bm{s})}{Q(G_h|\bm{x},A)Q(G_n|\bm{x},A)Q(\bm{s}|\bm{x})}\\
    \end{split}
\end{equation}

Third, we obtain the last equality with the help of $D_{KL}(\cdot|\cdot) \geq 0$

\begin{equation}
    \small
    \begin{split}
        &\ln P(\bm{x},k,A,y)\\
        \geq& \ln \frac{P(\bm{x},k,A,y,G_h,G_n,\bm{s})}{Q(G_h|\bm{x},A)Q(G_n|\bm{x},A)Q(\bm{s}|\bm{x})}\\
        =& \ln \frac{P(A|G_h,G_n)P(\bm{x},k,y,G_h,G_n,\bm{s})}{Q(G_h|\bm{x},A)Q(G_n|\bm{x},A)Q(\bm{s}|\bm{x})}\\
        =& \ln \frac{P(A|G_h,G_n)P(\bm{x}|G_h,G_n,\bm{s})P(k,y,G_h,G_n,\bm{s})}{Q(G_h|\bm{x},A)Q(G_n|\bm{x},A)Q(\bm{s}|\bm{x})}\\
        =& \ln \frac{P(A|G_h,G_n)P(\bm{x}|G_h,G_n,\bm{s})P(k|\bm{s})P(y,G_h,G_n,\bm{s})}{Q(G_h|\bm{x},A)Q(G_n|\bm{x},A)Q(\bm{s}|\bm{x})}\\
        =& \ln \frac{P(A|G_h,G_n)P(\bm{x}|G_h,G_n,\bm{s})P(k|\bm{s})P(y|G_h,\bm{s})}{Q(G_h|\bm{x},A)Q(G_n|\bm{x},A)}\\
        &+\ln \frac{P(G_h,G_n,\bm{s})}{Q(\bm{s}|\bm{x})}\\
        =& \ln \frac{P(A|G_h,G_n)P(\bm{x}|G_h,G_n,\bm{s})P(k|\bm{s})P(y|G_h,\bm{s})}{Q(G_h|\bm{x},A)Q(G_n|\bm{x},A)}\\
        &+\ln\frac{P(G_n|G_h,\bm{s})P(G_h,\bm{s})}{Q(\bm{s}|\bm{x})}\\
        =& \ln \frac{P(A|G_h,G_n)P(\bm{x}|G_h,G_n,\bm{s})P(k|\bm{s})P(y|G_h,\bm{s})}{Q(G_h|\bm{x},A)Q(G_n|\bm{x},A)}\\
        &+\ln \frac{P(G_n|G_h,\bm{s})P(\bm{s}|G_h)P(G_h)}{Q(\bm{s}|\bm{x})}\\
        =&-D_{KL}(Q(G_h|\bm{x},A)||P(G_h))\\
        &-D_{KL}(Q(G_n|\bm{x},A)||P(G_n|G_h,\bm{s}))\\
        &-D_{KL}(Q(\bm{s}|\bm{x})||P(\bm{s}|G_h))\\
        &+E_{Q(G_h|\bm{x},A)}E_{Q(G_n|\bm{x},A)}\ln P(A|G_h,G_n)\\
        &+E_{Q(G_h|\bm{x},A)}E_{Q(G_n|\bm{x},A)}E_{Q(\bm{s}|\bm{x})}\ln P(\bm{x}|G_h,G_n,\bm{s})\\
        &+E_{Q(\bm{s}|\bm{x})}\ln P(k|\bm{s})\\
        &+E_{Q(G_h|\bm{x},A)}E_{Q(\bm{s}|\bm{x})}\ln P(y|G_h,\bm{s})
    \end{split}
\end{equation}

\end{proof}